\documentclass[opre,nonblindrev]{style/informs}

\OneAndAHalfSpacedXI
\TheoremsNumberedThrough     
\EquationsNumberedThrough    
\ECRepeatTheorems

\bibliographystyle{style/informs}
\usepackage{natbib}
 \bibpunct[, ]{(}{)}{,}{a}{}{,}%

\usepackage{hyperref}       
\hypersetup{
  colorlinks=true,
  linkcolor=red,
  urlcolor=magenta,
  citecolor=blue
}

\usepackage{amsmath,amssymb,amsfonts}
\usepackage{graphicx,color}
\usepackage{url}
\usepackage{subfiles}
\usepackage{booktabs}
\usepackage{mathtools}
\usepackage{microtype}
\usepackage[english]{babel}
\usepackage{caption}
\captionsetup{
  justification=raggedright,
  singlelinecheck=false
  }
\usepackage[toc,page,title,titletoc]{appendix}

\usepackage[ruled]{algorithm2e}
\usepackage{bbm}
\usepackage{lmodern}
\usepackage{multirow} 
\usepackage{array}    

\usepackage{booktabs} 
\usepackage{array}    
\usepackage{xcolor}   
\usepackage{diagbox}
\usepackage{xfrac} 
\usepackage{caption} 
\usepackage[english]{babel}
\usepackage[autostyle, english = american]{csquotes}
\MakeOuterQuote{"}
\usepackage{tocloft}
\usepackage{titlesec}
\usepackage{enumitem}
\usepackage{accents}
\usepackage{endnotes}

\titleformat{\subsubsection}[block]{\normalfont\bfseries}{\thesubsubsection}{1em}{}
\titlespacing*{\subsubsection}{0pt}{1em}{1em}  

\def \cA {\mathcal{A}}
\def \cB {\mathcal{B}}
\def \cF {\mathcal{F}}
\def \cFl {\mathcal{F}_{\text{Lip}}}

\def \cH {\mathcal{H}}
\def \sumtj {\sum_{t\in\cT_j}}

\def \sumt {\sum_{t=1}^T}
\def \cT {\mathcal{T}}
\def \cS {\mathcal{S}}

\def \lt {\left}
\def \rt {\right}
\def \la {\left\langle}
\def \ra {\right\rangle}

\DeclarePairedDelimiter\floor{\lfloor}{\rfloor}
\def \ind {\mathbbm{1}}

\def \lt {\left}
\def \rt {\right}
\def \eps {\epsilon}
\def \tO {\tilde{O}}

\def \E {\mathbb{E}}

\def \mo {\mathbf{1}}

\def \tF {\tilde{\cF}}

\def \cT {\mathcal{T}}

\def \cV {\mathcal{V}}
\def \cL {\mathcal{L}}
\def \cB {\mathcal{B}}
\def \cP {\mathcal{P}}

\def \cX {\mathcal{X}}
\def \rone {\text{\uppercase\expandafter{\romannumeral1}}\xspace}
\def \rtwo {\text{\uppercase\expandafter{\romannumeral2}}\xspace}
\def \rthe {\text{\uppercase\expandafter{\romannumeral3}}\xspace}

\begin{document}

\TITLE{Learning to Bid in Non-Stationary Repeated First-Price Auctions}

\ARTICLEAUTHORS{%
\AUTHOR{Zihao Hu$^{1,3}$, Xiaoyu Fan$^2$, Yuan Yao$^1$, Jiheng Zhang$^{1,3}$, and Zhengyuan Zhou$^2$}
 \AFF{Department of Mathematics, The Hong Kong University of Science and Technology$^1$ \\
 Stern School of Business, New York University$^2$\\
 Department of IEDA, The Hong Kong University of Science and Technology$^3$\\ 
 \EMAIL{\{zihaohu, yuany, jiheng\}@ust.hk, \{fx2087,zz26\}@stern.nyu.edu}}
}

\ABSTRACT{

    First-price auctions have recently gained significant traction in digital advertising markets, exemplified by Google's transition from second-price to first-price auctions. Unlike in second-price auctions, where bidding one's private valuation is a dominant strategy, determining an optimal bidding strategy in first-price auctions is more complex. From a learning perspective, the learner (a specific bidder) can interact with the environment (other bidders, i.e., opponents) sequentially to infer their behaviors. Existing research often assumes specific environmental conditions and benchmarks performance against the best fixed policy (static benchmark). While this approach ensures strong learning guarantees, the static benchmark can deviate significantly from the optimal strategy in environments with even mild non-stationarity. To address such scenarios, a dynamic benchmark—representing the sum of the highest achievable rewards at each time step—offers a more suitable objective. However, achieving no-regret learning with respect to the dynamic benchmark requires additional constraints. By inspecting reward functions in online first-price auctions, we introduce two metrics to quantify the \emph{regularity} of the sequence of opponents' highest bids, which serve as measures of non-stationarity. We provide a minimax-optimal characterization of the dynamic regret for the class of sequences of opponents' highest bids that satisfy either of these regularity constraints. Our main technical tool is the Optimistic Mirror Descent (OMD) framework with a novel optimism configuration, which is well-suited for achieving minimax-optimal dynamic regret rates in this context. We then use synthetic datasets to validate our theoretical guarantees and demonstrate that our methods outperform existing ones.


}

\KEYWORDS{Learning to Bid; Online First-price Auctions; Non-stationary Online Learning.} 

\maketitle

\section{Introduction}


By 2029, global digital advertising spending is projected to reach $1126$ billion \citep{statistaad}. As online ad display grows in importance, it has become a central focus in operations research, information systems, and machine learning (see e.g., \citealp{wang2017display,choi2020online}). In online ad markets (also known as ad exchanges), advertisers bid for ad impressions offered by publishers on ad exchanges through auctions to maximize their rewards, while publishers manage inventory to optimize customer impressions. Specifically, in each auction round, the publisher displays an ad impression to potential advertisers (buyers), who assess its value and submit bids. Allocation and pricing of impressions are then determined by an online auction protocol.

Second-price auctions, championed by Nobel-prize-winning work of \citet{vickrey1961counterspeculation}, have been widely used in online ad markets \citep{edelman2007internet,despotakis2021first} for their incentive compatibility, which encourages truthful bidding. In this format, the highest bidder wins the ad impression but pays the second-highest bid. Despite its theoretical elegance, second-price auctions face practical criticisms, particularly the potential for auctioneers to manipulate the second-highest bid to inflate payments undetectably \citep{rothkopf1990vickrey,lucking2000vickrey,akbarpour2020credible}. In online ad auctions, such manipulation allows ad exchanges to substantially increase revenue. Concerns over trust and the rise of publisher-initiated header bidding \citep{despotakis2021first} have led major ad exchanges—including Google AdSense \citep{wong2021moving}, Google Ad Manager \citep{bigler2019rolling}, Yahoo Advertising \citep{alcobendas2021adjustment}, and Xandr \citep{xandr2024auction}—to shift to first-price auctions. In these, the highest bidder wins and pays their bid, thereby addressing trust issues. However, first-price auctions lack incentive compatibility, as revealing a bidder's true valuation is no longer optimal. This raises a critical question: what bidding strategies should bidders adopt in online first-price auctions to maximize cumulative rewards?

Two primary perspectives address this problem: the game-theoretic and online learning approaches. From the game-theoretic perspective, the problem originates from foundational work by \citet{vickrey1961counterspeculation,myerson1981optimal}, where bidders are modeled as rational agents with partial or complete information about competitors' valuation distributions. This framework allows derivation of optimal strategies and Bayesian Nash equilibria. While significant progress has been made in computing Bayesian Nash equilibria for online first-price auctions \citep{wang2020bayesian,filos2021complexity,bichler2023computing,chen2023complexity,filos2024computation}, these methods often assume bidders have some precise knowledge of valuation distributions. While such assumptions may hold in physical auctions—where industry peers have insights into each other's valuations—they are less realistic in online auctions, where bidders typically lack information about competitors' identities, making valuation estimation far more challenging.

An alternative is the online learning perspective, where a specific bidder is treated as the "learner" and the remaining bidders are modeled as the "environment" (potentially with some assumptions on the environment to ensure learnability). In this view, the problem of finding the optimal bidding strategy can be cast as a sequential two-player game. At the start of the game, the learner is assumed to have no knowledge of the environment. However, based on past decisions and the feedback received, the learner can iteratively update their bidding strategy. A common performance metric in this perspective is called (static) regret, which measures the difference between the cumulative reward achieved by the best fixed policy and the cumulative reward of the learning algorithm. The goal of online learning is to achieve sublinear regret, which ensures that the time-averaged performance of the learning algorithm asymptotically converges to that of the best fixed policy.
This perspective has inspired a body of seminal work \citep{han2020learning,zhang2022leveraging,badanidiyuru2023learning,balseiro2023contextual,han2025optimal} focused on achieving sublinear regret in both stochastic and adversarial settings. In these contexts, the private value of the bidder and/or the opponents' highest bid at each time step are either independently and identically distributed (i.i.d.) or adversarially generated.


While these approaches provide strong theoretical guarantees, real-world scenarios are usually much more involved. The value of a fixed ad impression can change over time, possibly exhibiting seasonal periodic trends or even sudden shifts due to unforeseen events. The values of different ad impressions can be correlated in complex ways, since advertisers may have complementary marketing campaigns, competing objectives, or overlapping target audiences.
 For a certain learner, their opponents' bidding strategies may also evolve over time, as they adapt to the learner's bidding behavior. These complexities often fall outside the assumptions of purely stochastic or adversarial environments, and it is more natural to term these situations as \emph{non-stationary environments}. In such environments, considering competing with the best fixed policy (static benchmark) might oversimplify the situation that the learner faces.
In contrast, a dynamic benchmark—representing the maximum achievable cumulative reward—is always optimal, even in non-stationary settings.
Learning in non-stationary environments poses a fundamental challenge in operations research \citep{besbes2015non,besbes2019optimal,cheung2022hedging,cheung2023nonstationary} and machine learning \citep{yang2016tracking,zhang2018adaptive,wei2021non}, yet it remains underexplored in the context of online first-price auctions. This gap motivates the core focus of this work, which investigates the following key questions:

\emph{For online first-price auctions in non-stationary ad markets, can we effectively compete with a dynamic benchmark? What mathematical tools can help establish minimax-optimal dynamic regret rates in such settings?}

\subsection{Our Contributions}


Consider a learner participating in a $T$-round online first-price auction. In each round $t$, the learner observes an ad impression, receives a private valuation $v_t$, and then determines a bid $b_t$. After submitting $b_t$, the learner observes $m_t$, the highest bid from other participants, and receives a reward of $(v_t-b_t)\cdot\ind(b_t\ge m_t)$. This full-information feedback setting is widely used in practice, including Google Ad Exchange \citep{google_openrtb_adx_proto}. Other feasible feedback types include binary feedback \citep{balseiro2023contextual} and winning-bid feedback \citep{han2025optimal}. Our main contributions are summarized as follows:
\begin{itemize}

    \item Inspired by previous work on non-stationary online learning \citep{besbes2015non,jadbabaie2015online,yang2016tracking}, we propose two regularity conditions on the sequence of opponents' highest bids to characterize the extent of non-stationarity: the temporal variation $V_T\coloneqq\sum_{t=2}^T|m_t-m_{t-1}|$ and the number of abrupt switches $L_T\coloneqq\sum_{t=2}^T\ind(m_t\neq m_{t-1})$. If either $V_T=\Omega(T)$ or $L_T=\Omega(T)$, then any non-anticipatory policy suffers $\Omega(T)$ dynamic regret. Thus, a reasonable goal is to achieve sublinear dynamic regret rates when $V_T=o(T)$ or $L_T=o(T)$. Notably, these regularity conditions do not depend on the learner's private valuation sequence, allowing the learner's private valuation sequence to be adversarially generated.

    \item We propose policies that are efficiently implementable and achieve the minimax-optimal dynamic regret guarantees of $\tO\left(\sqrt{TV_T}\right)$ and $\tO\left(L_T\right)$, where $\tO(\cdot)$ hides poly-logarithmic factors. The one-sided Lipschitzness of the reward function poses significant challenges in predicting the optimal bid, as discussed further in Section \ref{sec:full_why}. To address this challenge, we employ the Optimistic Mirror Descent (OMD) framework \citep{chiang2012online,rakhlin2013optimization}, a powerful tool with a customizable optimism vector that achieves improved static regret rates in slowly evolving environments. Interestingly, we design the optimism not simply to minimize the static regret, but rather to achieve a favorable balance between the static regret and the transition cost from static regret to dynamic regret, thus achieving the optimal dynamic regret rates.

    \item We establish $\Omega(\sqrt{TV_T})$ and $\Omega(\sqrt{L_T})$ minimax lower bounds for online first-price auction instances regularized by either $V_T$ or $L_T$, respectively. For sequential learning of convex and Lipschitz functions with exact feedback, the dynamic regret lower bound in terms of $V_T$ is $\Omega(V_T)$ \citep{jadbabaie2015online,yang2016tracking}. Our results, therefore, highlight a sharp separation between learning one-sided Lipschitz functions and convex, Lipschitz functions. To prove the lower bounds, we construct batches with small temporal variations. Within each batch, the optimal dynamic regret of any non-anticipatory policy can be computed via dynamic programming. By suitably concatenating these batches, we derive the desired lower bounds.     
    
    \item Since both $V_T$ and $L_T$ capture different types of regularity in the opponents' highest bid sequence, it is desirable to achieve a best-of-both-worlds guarantee of $\tilde{O}(\min\{\sqrt{TV_T}, L_T\})$, which automatically adapts to the better of the two bounds.
 We achieve this by combining our algorithms using the meta algorithm of \citet{sani2014exploiting}. The theoretical guarantees are summarized in Table~\ref{tab:res_full}. Notably, the lower bounds hold even if $V_T$ or $L_T$ is known in advance, while the upper bounds do not require such prior knowledge.

 \begin{table}[h!]
        \captionsetup{justification=raggedright, singlelinecheck=false}
        \caption{Dynamic regret rates lower bounds and upper bounds when the either $V_T$ or $L_T$ is constrained. $V_T\coloneqq\sum_{t=2}^T|m_t-m_{t-1}|$ and $L_T\coloneqq\sum_{t=2}^T\ind(m_t\neq m_{t-1})$ are two metrics to measure the regularity of the opponents' highest bid sequence. Here we use $\tO(\cdot)$ to omit polylogarithmic factors.}
        \centering
        \renewcommand{\arraystretch}{1.5} 
        \setlength{\tabcolsep}{10pt}      
        \begin{tabular}{|c|c| c |}
            \hline
            Regularity & Upper Bound &  Lower Bound    \\ 
            \hline\hline 
           {$V_T$}  & $\tO(\sqrt{TV_T})$, Theorem \ref{thm:full_vt_ub}  &$\Omega(\sqrt{TV_T})$, Theorem \ref{thm:full_lb}  \\
            \hline
            $L_T$  & $\tO(L_T)$, Theorem \ref{thm:full_lt1} & $\Omega(L_T)$, Theorem \ref{thm:full_lt2}  \\
            \hline
            Best-of-both-worlds & $\tO(\min\{\sqrt{TV_T},L_T\})$, Theorem \ref{thm:bobw} & -\\
            \hline
        \end{tabular}
        \label{tab:res_full}
    \end{table}

    \item We evaluate our theoretical findings using synthetic datasets. We first confirm that our algorithms achieve the theoretical dynamic regret rates. We then consider a multi-agent bidding environments where the opponents of the learner run the budget-pacing policy by \citet{gaitonde2022budget}, and demonstrate that our algorithms outperform two important baselines: the Hedge algorithm and the SEW policy \citep{han2020learning}, especially in the regime where opponents have limited budgets.

\end{itemize}
\subsection{Key Challenges}

The primary challenge in proving the upper bounds stems from the one-sided Lipschitz property of the reward function in online first-price auctions. In simple terms, bidding slightly higher than necessary results in only a minor revenue loss, while bidding slightly lower can cause a much larger loss. To address non-stationarity in the data, we adopt the restart scheme introduced by \citet{besbes2015non}—dividing the time horizon $T$ into batches and restarting a dedicated algorithm at the beginning of each batch. Our analysis decomposes the dynamic regret into two components: the static regret and the transition cost that bridges static and dynamic regret. Unlike previous work, we address these two terms using novel analytical tools. 
For instance, to bound the transition cost as in \citet{besbes2015non}, one needs to bound the temporal variation of the reward sequence by that of the opponents' highest bid sequence, which roughly amounts to bounding the variation of rewards by the variation of the maximizers\footnote{The correspondence is not exact: when $v_t \geq m_t$, the opponents' highest bid $m_t$ is the reward maximizer, but when $v_t < m_t$, the reward is maximized for any bid smaller than $m_t$.}. While previous literature \citep{jadbabaie2015online,yang2016tracking} assumes full Lipschitzness, we relax this assumption by relying solely on the one-sided Lipschitz property.



To control the static regret, we recall that the SEW policy \citep{han2020learning} achieves the minimax-optimal $\tilde{O}(\sqrt{T})$ static regret bound \citep{han2025optimal} in online first-price auctions with adversarial inputs. Moreover, in settings with convex Lipschitz loss functions and noisy feedback, \citet{besbes2015non} demonstrated that restarting the online gradient descent (OGD) algorithm, which achieves minimax-optimal static regret, ensures minimax-optimal dynamic regret. Surprisingly, directly plugging the SEW policy into the restart scheme does not produce the same optimal dynamic regret rates—
likely because the SEW policy, designed for adversarial environments, lacks adaptivity in slowly varying settings.

To improve the sublinear static regret guarantee under slow variation, we employ Optimistic Mirror Descent (OMD) \citep{chiang2012online,rakhlin2013optimization,wei2018more}—a variant of mirror descent that incorporates an “optimistic” guess of the gradient of the expected reward for the current round. When this guess is taken as the gradient from the previous round, OMD can yield much lower static regret than $O(\sqrt{T})$ in slowly varying environments. 
Yet, no previous work has used OMD to reach minimax-optimal dynamic regret rates. By carefully configuring the optimism vector, we show that OMD’s static regret can be bounded by the transition cost plus a small additive term, effectively balancing the trade-off between static regret and transition cost and leading to minimax-optimal dynamic regret rates.

The difficulty in proving the lower bounds arises because the learner directly observes the highest bids of the others rather than receiving noisy feedback. Noisy feedback simplifies many information-theoretic arguments—such as those based on Le Cam’s method—which have been crucial for deriving lower bounds in non-stationary online learning \citep{besbes2015non,besbes2019optimal,cheung2022hedging} and in online first-price auctions \citep{han2020learning,zhang2022leveraging,cesa2024role}. In our setting, alternative approaches are required. Our optimal lower bounds are inspired by the minimax lower bounds for learning with a few experts \citep{cover1966behavior,gravin2016towards,harvey2023optimal}, which establish lower bounds by constructing suitable problem instances whose minimax value can be evaluated. Specifically, leveraging the one-sided Lipschitz property, we construct batches of opponents' highest bids with small temporal variation, but any non-anticipatory policy suffers a large amount of dynamic regret in these batches. We then carefully stitch these batches together to obtain the desired lower bound results.





\subsection{Paper Organization}
The paper is organized as follows. In Section \ref{sec:related_work}, we review prior work, positioning our contributions within the existing literature. Section \ref{sec:problem_form} formally defines the problem setting and introduces our methodology. In Section \ref{sec:full_ub}, we present our upper bound results by deriving dynamic regret rates within the Optimistic Mirror Descent (OMD) framework. Section \ref{sec:full_lb} then provides lower bounds via a minimax analysis, thereby establishing the optimality of our upper bounds. Section \ref{sec:experiment} offers numerical simulations that validate the dynamic regret rates of our proposed algorithms and compare them with baseline approaches in a multi-agent bidding environment. Finally, Section \ref{sec:conclusion} summarizes our findings and discusses potential directions for future research.

\subsection{Notations}

Let $v_t$ and $m_t$ denote the learner's private valuation and the highest bid from other bidders at round $t$, respectively. We denote the learner's bid at round $t$ by $b_t$. Following previous work \citep{han2020learning,balseiro2023contextual,han2025optimal}, we assume $v_t, m_t, b_t \in [0,1]$. Since the other bidders' highest bid $m_t$ is observed by the learner, we discretize the decision space $[0,1]$ into $N$ discrete bidding prices and model the problem as \emph{learning with expert advice} \citep{cesa2006prediction} with $N$ experts. Let $r_{t,i}$ denote the reward of the $i$-th expert at round $t$.


Additionally, $\ind(\cdot)$ denotes the indicator function of an event.
$\mathbb{E}[\cdot]$ represents the expectation operator.
$[s] \coloneqq \{1, \dots, s\}$ denotes the set of integers from 1 to $s$. For a convex and differentiable function $\psi$ defined on a convex region $\mathcal{P}$, $D_\psi(p,q)\coloneqq \psi(p)-\psi(q)-\langle p-q,\nabla\psi(q)\rangle$ is the Bregman divergence. We use $\mathbf{1}$ to denote an all-ones vector. We use standard asymptotic notation $O(\cdot)$, $\Omega(\cdot)$, $\Theta(\cdot)$ and $\tilde{O}(\cdot)$ to simplify the analysis: We use $x_n = O(y_n)$ to denote that there exist constants $n_0 \in \mathbb{N}^+$ and $M \in \mathbb{R}^+$ such that for all $n \geq n_0$, $x_n \leq M \cdot y_n$.
Similarly, $x_n = \Omega(y_n)$ is equivalent to $y_n = O(x_n)$, $x_n=\Theta(y_n)$ means $x_n=O(y_n)$ and $x_n=\Omega(y_n)$, and $\tilde{O}(\cdot)$ is similar to $O(\cdot)$ but hides polylogarithmic factors.


\section{Related Work}
\label{sec:related_work}
In this section, we briefly review relevant work on first-price auctions and online learning in non-stationary environments.

\paragraph{First-price Auctions.} Although Vickrey is more commonly associated with the second-price auction, \citet{vickrey1961counterspeculation} formalize and compare several auction formats, including the first-price auction. In recent years, as certain online ad exchanges switch from second-price to first-price auctions, first-price auctions gain increasing attention from researchers in economics, operations research, and machine learning. From a game-theoretic perspective, researchers study aspects such as the Bayesian Nash equilibrium, pacing equilibrium, and algorithmic collusion behaviors in first-price auctions \citep{wang2020bayesian,filos2021complexity,conitzer2022pacing,banchio2022artificial,banchio2023adaptive,chen2023complexity,bichler2023computing,jin2023first,balseiro2023contextual}.

This work focuses on a learning perspective, where a learner sequentially interacts with the environment to learn an optimal bidding strategy. Inspired by patterns in real-world auction data, \citet{zhang2021meow} introduce a non-parametric approach for bid updates, demonstrating its superiority over traditional parametric methods. \citet{balseiro2023contextual} employ cross-learning to improve regret rates for online first-price auctions with binary feedback. When $v_t$ is i.i.d. from a known distribution and $m_t$ is chosen adversarially, they achieve a regret rate of $\tO(T^{\frac{2}{3}})$, improving upon the $\tO(T^{\frac{3}{4}})$ rate achieved by standard contextual bandit techniques. Later, \citet{schneider2024optimal} extend these results to the setting where the distribution of $v_t$ is unknown, achieving the same $\tO(T^{\frac{2}{3}})$ regret through novel techniques. \citet{han2020learning} study online first-price auctions with full-information feedback when both $v_t$ and $m_t$ are chosen adversarially. Using the tree-chaining technique \citep{cesa2017algorithmic}, they achieve a regret rate of $\tO(\sqrt{T})$ against the set of 1-Lipschitz policies. When $m_t$'s are i.i.d. generated, \citet{han2025optimal} improve the analysis of \citet{balseiro2023contextual} to the winning-bid feedback setting throught some novel observations, demonstrating $\tO(\sqrt{T})$ regret. Additionally, \citet{zhang2022leveraging} explore improved regret guarantees by incorporating hints about bidding profiles. \citet{badanidiyuru2023learning} consider online first-price auctions where $m_t$ is generated by a context vector with log-concave noise and establish $\tO(\sqrt{T})$ regret guarantees under full-information feedback. \citet{wang2023learning} investigate first-price auctions with budget constraints, achieving sublinear regret rates when both $v_t$ and $m_t$ are i.i.d. \citet{kumar2024strategically} study settings where $v_t$ is i.i.d. and $m_t$ is adversarially chosen, achieving $\tO(\sqrt{T})$ regret that is both rate-optimal and strategically robust. \citet{cesa2024role} characterize minimax-optimal static regret rates for various feedback settings, highlighting the role of auction format transparency. All the aforementioned works focus on competing against the best fixed policy within a pre-determined policy set, whereas our work aims to compete with the policy that achieves the maximum possible revenue.

Recently, there is a growing body of work on online first-price auctions, where the learner faces additional constraints such as budget or ROI constraints. 
\citet{balseiro2019learning} propose the budget-pacing dynamics in online second-price auctions, which use a sequence of Lagrangian multipliers to shade the learner's bid. 
\citet{gaitonde2022budget} generalize this idea to online first-price auctions with budgets, while 
\citet{lucier2024autobidders} further allow the existence of ROI constraints. Other related work includes \citet{ai2022no,castiglioni2022online,wang2023learning,fikioris2023liquid,aggarwal2025no}. Both \citet{gaitonde2022budget} and \citet{lucier2024autobidders} consider to compete with a dynamic benchmark as well, but there are some key differences between their work and ours. First, \citet{gaitonde2022budget,lucier2024autobidders} consider value maximizing bidders while our work considers revenue maximizing bidders; the value maximizing bidders make sense in the constrained setting but not for the unconstrained setting, since the bidder has the incentive to win every ad impression. Second, \citet{gaitonde2022budget,lucier2024autobidders} consider competing with a sequence of Lagrangian multipliers, where each Lagrangian multiplier makes the expected expenditure at that round equal to the ratio of the initial budget and the time horizon. This sequence is not guaranteed to achieve the highest possible cumulative value, so the dynamic benchmark considered therein is weaker than ours. Third, it is unknown whether \citet{gaitonde2022budget,lucier2024autobidders} achieve minimax-optimal regret rates even with respect to this relaxed dynamic benchmark notion.

\paragraph{Learning in Non-stationary Environments.} \citet{besbes2015non} study stochastic optimization in non-stationary environments, where the loss at each round may vary, and show that sublinear dynamic regret is achievable when the temporal variation—a measure of the total change in the loss function over time—is sublinear in the time horizon. \citet{besbes2015non} provide minimax-optimal characterizations of dynamic regret for online convex optimization and bandit convex optimization. Our problem formulation is inspired by \citet{besbes2015non}, but we make necessary adjustments to better accommodate the one-sided Lipschitzness of the reward function. Please see Remark \ref{rmk:defn} in Section \ref{sec:problem_formulation} for a comprehensive comparison.
\citet{besbes2015non} assumes that the temporal variation of the loss sequence is known in advance. \citet{jadbabaie2015online} demonstrate how to remove this assumption in the online convex optimization setting. Additionally, \citet{jadbabaie2015online,yang2016tracking,zhang2018adaptive,baby2021optimal,baby2022optimal} explore alternative definitions of dynamic regret, such as the path-length of the minimizers of the loss functions, and establish corresponding dynamic regret guarantees. These formulations are also related to ours, but these approaches rely on strong convexity, exponential concavity, convexity, Lipschitzness or smoothness of the loss functions, while the reward function we consider is merely one-sided Lipschitz. \citet{besbes2019optimal} investigate multi-armed bandit problems under non-stationary reward distributions, demonstrating that sublinear regret can be achieved if the total variation of these distributions is known and sublinear in the time horizon. To remove the need for prior knowledge of the variation budget, \citet{cheung2022hedging} propose the bandit-over-bandit technique, which applies to various non-stationary stochastic bandit problems. Building on this, \citet{zhao2021simple} simplify the analysis in \citet{cheung2022hedging} and derive sublinear regret bounds for linear bandits with variable decision sets. In the context of reinforcement learning (RL), \citet{cheung2023nonstationary} employ a similar bandit-over-RL approach to tackle non-stationary settings, achieving nearly optimal regret bounds. \citet{wei2021non} provide a general framework for non-stationary online learning, covering both linear bandits and RL, and achieve optimal dynamic regret rates. \citet{simchi2023non} study experimental design under non-stationary linear trends, while \citet{chen2025learning} focus on non-stationary multi-armed bandits with periodic mean rewards. \citet{huang2025stability} consider non-stationary online learning with noisy realization
of the losses, and achieve minimax-optimal regret guarantees when losses are strongly convex or merely Lipschitz. Though \citet{haoyu2020online} study online second-price auctions in non-stationary settings, their objective and methods differ significantly from ours.


\section{Problem Formulation and Main Results}
\label{sec:problem_form}
In this section, we introduce the problem formulation for online first-price auctions in non-stationary environments, outline the main algorithmic framework we will use, present the informal main results, and define the notations that will be used throughout the paper.


\subsection{Problem Formulation}
\label{sec:problem_formulation}
In non-stationary environments, an advertiser's valuation for an ad impression can vary over time, requiring advertisers to account for this variability when participating in online first-price auctions. We begin with a general description of the online first-price auction \citep{han2020learning,han2025optimal,cesa2024role}, followed by a formal definition of the dynamic benchmark and possible ways to quantify the degree of non-stationarity.

In this auction format, a set of bidders (advertisers) competes to purchase ad impressions from a publisher. Each round, the publisher displays an ad impression along with relevant details, such as user demographics, keywords, and the ad's size and location. Each bidder estimates the value of the ad impression and submits a bid. Under the first-price auction protocol, the bidder who offers the highest bid wins the ad impression and pays the bid amount. Formally, the online first-price auction is a game spanning $T$ rounds. In each round $t=1,\dots,T$, the bidder observes an ad impression, generates a private value $v_t\in[0,1]$, and submits a bid $b_t\in[0,1]$. Let $m_t\in[0,1]$
represent the highest bid among other bidders. The bidder's payoff is then given by
\[
r(b_t; v_t, m_t) \coloneqq (v_t - b_t) \cdot \ind(b_t \geq m_t).
\]
Here, $\ind(b_t\ge m_t)$ is the indicator function that equals $1$ if the bidder wins the auction (i.e., $b_t\ge m_t$), and $0$ otherwise. The one-sided Lipschitz property means: when moving from a higher bid $b'\le v_t$ to a slightly lower bid $b$, the reward increase is bounded by the bid difference, but the reward decrease can be significant (particularly when crossing the discontinuity at $m_t$).

For simplicity, we assume the time horizon $T$ is known to the learner. If 
$T$ is unknown, the doubling trick \citep{auer2002adaptive,cesa2006prediction} can be used to eliminate this requirement. Since this is a sequential decision-making problem, it is essential to formally define the information received by the learner before submitting $b_t$. We mainly consider the case where the learner observes $m_t$, the highest bid from other bidders, so the information up to time $t-1$ can be described by the following filtration:
\[
    \cH_t\coloneqq\sigma((v_s,m_s)_{s=1}^{t-1},v_t),
\]
where $\sigma(\cdot)$ is the $\sigma$-algebra generated by the observations. Conventionally, the filtration up to \(t-1\) should not include \(v_t\), as this represents information from the current round. However, prior work \citep{han2020learning,balseiro2023contextual,han2025optimal} assumes that the bidder knows \(v_t\) before determining their bid \(b_t\). This assumption is reasonable because ad exchanges typically display the ad impression and related contextual or demographic information to bidders, enabling them to estimate the value of the impression. Therefore, we include \(v_t\) in the filtration.

Let $(\Omega, \mathcal{F}, \mathbb{P})$ be a probability space with sample space $\Omega = [0,1]$ and $\sigma$-algebra $\mathcal{F}$. Let $U$ be a random variable defined on this probability space, i.e., $U: \Omega \to \mathbb{R}$ is an $\mathcal{F}$-measurable function. We define the set of admissible policies $\Pi$ as follows:

For each $t \in [T] = {1, 2, \dots, T}$, let $\pi_t: \mathbb{R}^{2t-1} \times \Omega \to \mathbb{R}$ be a measurable function. A policy $\pi \in \Pi$ is then a sequence of such measurable functions: $\pi = (\pi_1, \pi_2, \dots, \pi_T)$. Given a policy $\pi \in \Pi$, the bid $b_t$ at time $t$ is determined by:
\[
b_t=\pi_t((v_s,m_s)_{s=1}^{t-1},v_t,U).
\]
Thus, the set of admissible policies $\Pi$ is characterized by the collection of these measurable functions $\{\pi_t\}_{t=1}^T$. Note that the probability measure $\mathbb{P}$ plays a crucial role in determining the distribution of the random variable $U$ and consequently influences the stochasticity of the bidding process.


Previous work on online first-price auctions typically aims to achieve sublinear regret over \(T\) rounds against the best fixed policy in hindsight. Formally, this involves designing a policy to minimize the regret:  
\[
\E[\text{R}_T(\pi)] \coloneqq \sup_{f \in \tilde{\cF}} \sum_{t=1}^T r(f(v_t); v_t, m_t) - \sumt\E[r(b_t; v_t, m_t)],
\]
where \(\tilde{\cF}\) is a class of policies. Common choices for \(\tilde{\cF}\) include the set of \(1\)-Lipschitz policies \citep{han2020learning}, the set of monotone policies \citep{han2020learning} or the set of policies that map \(C\) possible valuations to \(K\) discrete bids \citep{balseiro2023contextual,schneider2024optimal}.  

Here, we refer to $\sup_{f \in \tilde{\cF}} \sum_{t=1}^T r(f(v_t); v_t, m_t)$ as the \emph{static benchmark}. In contrast, we define the \emph{dynamic benchmark} as:  
\begin{equation}\label{eq:dynamic}
    \sum_{t=1}^T r(b_t^*; v_t, m_t)\coloneqq\sum_{t=1}^T\max_{b\in[0,1]}r(b;v_t,m_t) = \sum_{t=1}^T \max\{v_t - m_t, 0\},
\end{equation}
where $b_t^*\in\argmax_{b\in[0,1]}r(b;v_t,m_t)$ is the optimal bid at round $t$. We note that $b_t^*$, the optimal bid at round $t$, should be $m_t$ whenever $v_t \geq m_t$, and can be any value no greater than $m_t$ when $v_t < m_t$. Without loss of generality, in this work, we set
\begin{equation}\label{eq:bt_star}
    b_t^*=\begin{cases}
        m_t,&v_t\ge m_t\\
        v_t,&v_t<m_t.
    \end{cases}
\end{equation}
This sequence achieves optimal revenue while eliminating the ambiguity of the optimal bid.

It is immediate to see that the dynamic benchmark represents the maximum possible revenue that the learner can achieve. Moreover, the dynamic benchmark can outperform the static benchmark by \(\Omega(T)\), even in instances of online first-price auctions with mild regularity in the opponents' highest bid sequence. Example \ref{ex:sta2dyn} illustrates the reason for this discrepancy between static and dynamic benchmarks.
\begin{example}\label{ex:sta2dyn}
    Assume $v_t\equiv 1$ for $t\in[T]$ and 
    \[
    m_t=\begin{cases}
        0, &1\le t\le\frac{T}{2},\\
        \frac{1}{2}, &\frac{T}{2}+1\le t\le T.
    \end{cases}
    \]
    Then
    \begin{equation*}
    \begin{split}
        &\sumt r(b_t^*;v_t,m_t)-\sup_{f\in\tilde{\cF}}\sumt r(f(v_t);v_t,m_t)\\
        =&\sumt \max\{v_t-m_t,0\}-\sup_{f\in\tilde{\cF}}\sumt r(f(v_t);v_t,m_t)=\frac{3T}{4}-\frac{T}{2}=\frac{T}{4}.
    \end{split}
    \end{equation*}
    The main fact we rely on is that \(f(v_t) \equiv f(1)\) can only take a single real value and, as such, cannot be optimal on both segments.
\end{example}
Consequently, a no-regret online learning policy, while converging to the best fixed policy in the long run, does not converge to the policy with the highest possible revenue. In this work, we consider minimizing the following dynamic regret in online first-price auctions:
\begin{equation}\label{eq:dynamic_reg}
\mathbb{E}[\text{DR}_T(\pi)] \coloneqq \sum_{t=1}^T r(b_t^*; v_t, m_t) - \sum_{t=1}^T \mathbb{E}[r(b_t; v_t, m_t)].
\end{equation}

It is well-established (e.g., \cite{besbes2015non,yang2016tracking,zhang2018adaptive,besbes2019optimal}) that achieving sublinear dynamic regret uniformly is impossible without imposing further constraints on the problem instances. To ensure no-regret online learning, we investigate policies with sublinear dynamic regret guarantees under the assumption that the regularity of the opponents' highest bid sequence is sublinear in the time horizon $T$. We consider two specific metrics to quantify this regularity:
\begin{align}
    &V_T\coloneqq\sum_{t=2}^T|m_t-m_{t-1}|\label{eq:def_vt}\\
    &L_T\coloneqq\sum_{t=2}^T\ind(m_t\neq m_{t-1})\label{eq:def_lt},
\end{align}
where $V_T$ measures the temporal variation of the opponents' highest bid sequence, while $L_T$ measures the number of abrupt switches in the opponents' highest bid sequence.
\begin{remark}
    \label{rmk:defn}
    
    The regularity conditions on the opponents' highest bid sequence (Equations \eqref{eq:def_vt} and \eqref{eq:def_lt}) are inspired by \citet{besbes2015non}, where the authors use the temporal variation of reward/loss functions as a regularity measure. In our setting, their measure translates to $\sum_{t=2}^T \sup_{b\in[0,1]} |r(b; v_t, m_t) - r(b; v_{t-1}, m_{t-1})|$. 
    However, we argue that $\sum_{t=2}^T|m_t-m_{t-1}|$ is a more compact and reasonable metric for measuring non-stationarity. By Proposition \ref{prop:compare}, $\sum_{t=2}^T|m_t-m_{t-1}|$ is at most twice $\sum_{t=2}^T \sup_{b\in[0,1]} |r(b; v_t, m_t) - r(b; v_{t-1}, m_{t-1})|$. In general, however, $\sum_{t=2}^T \sup_{b\in[0,1]} |r(b; v_t, m_t) - r(b; v_{t-1}, m_{t-1})|$ can be much larger than $\sum_{t=2}^T|m_t-m_{t-1}|$, as demonstrated in Examples \ref{ex:example1} and \ref{ex:example2}. The disadvantages of $\sum_{t=2}^T \sup_{b\in[0,1]} |r(b; v_t, m_t) - r(b; v_{t-1}, m_{t-1})|$ stem from: (i) this metric neglects the one-sided Lipschitzness of the reward function; (ii) this metric depends on the sequence $(v_t)_{t=1}^T$, which is unnecessary upon careful inspection.


In contrast, our measure defined in Equation \eqref{eq:def_vt} compactly captures the regularity of the opponents' highest bid sequence while avoiding both disadvantages. Additionally, \citet[Figure 1]{besbes2015non} emphasize two types of temporal patterns: continuous change and discrete shocks, which directly correspond to our regularity conditions in Equations \eqref{eq:def_vt} and \eqref{eq:def_lt}, respectively. 

\end{remark}

\begin{proposition}\label{prop:compare}
    For any $v_{t-1},v_t,m_{t-1},m_t\in[0,1]$,
    \[
    \begin{split}        
    |m_t-m_{t-1}|\le& 2\sup_{b\in[0,1]} |r(b; v_t, m_t) - r(b; v_{t-1}, m_{t-1})|.\\
    \end{split}
    \]
\end{proposition}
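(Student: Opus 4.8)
The plan is to exhibit a single well-chosen bid at which the two reward functions differ by at least half of $|m_t-m_{t-1}|$, which immediately yields the claimed inequality after multiplying by two. Since the quantity $\sup_{b\in[0,1]}|r(b;v_t,m_t)-r(b;v_{t-1},m_{t-1})|$ is unchanged when $(v_t,m_t)$ and $(v_{t-1},m_{t-1})$ are interchanged, and $|m_t-m_{t-1}|$ is likewise symmetric, I may assume without loss of generality that $m_t\ge m_{t-1}$; the reverse case follows by relabeling (with $v_t$ then playing the role of $v_{t-1}$). If $m_t=m_{t-1}$ the inequality is trivial, so I assume $m_{t-1}<m_t$.

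The key observation is to restrict attention to bids $b$ lying in the gap $[m_{t-1},m_t)$. For any such $b$ we have $b\ge m_{t-1}$ but $b<m_t$, so exactly one of the two auctions is won: $\ind(b\ge m_t)=0$ while $\ind(b\ge m_{t-1})=1$. Consequently $r(b;v_t,m_t)=0$ and $r(b;v_{t-1},m_{t-1})=v_{t-1}-b$, so the reward gap at such a $b$ equals $|v_{t-1}-b|$. This gives
\[
\sup_{b\in[0,1]}|r(b;v_t,m_t)-r(b;v_{t-1},m_{t-1})|\ \ge\ \sup_{b\in[m_{t-1},m_t)}|v_{t-1}-b|.
\]

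It then remains to lower-bound the right-hand side by $(m_t-m_{t-1})/2$. The one subtlety is that the gap reward $|v_{t-1}-b|$ depends on the (unknown) location of $v_{t-1}$, so no single fixed $b$ works uniformly; I handle this by evaluating at the two endpoints of the gap. The choice $b=m_{t-1}$ (attained) yields $|v_{t-1}-m_{t-1}|$, while letting $b\uparrow m_t$ shows the supremum is at least $|v_{t-1}-m_t|$. The triangle inequality gives $|v_{t-1}-m_{t-1}|+|v_{t-1}-m_t|\ge m_t-m_{t-1}$, so at least one endpoint produces a gap of at least $(m_t-m_{t-1})/2$. Combining this with the displayed inequality and multiplying by two finishes the argument.

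This proof is elementary; the only step requiring care is the endpoint/triangle-inequality maneuver that removes the dependence on $v_{t-1}$. I expect this to be the sole (mild) obstacle, and it is precisely the place where the jump discontinuity of $r(\cdot\,;v,m)$ at the threshold $b=m$ is exploited: choosing $b$ inside the gap collapses one reward to zero while leaving the other equal to the full surplus $v_{t-1}-b$.
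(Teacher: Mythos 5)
Your proof is correct and takes essentially the same route as the paper's: restrict attention to bids in the gap $[m_{t-1},m_t)$, where exactly one of the two auctions is won so the reward difference equals $|v_{t-1}-b|$, and then combine the two endpoint values $|v_{t-1}-m_{t-1}|$ and $|v_{t-1}-m_t|$ via the triangle inequality to extract a difference of at least $(m_t-m_{t-1})/2$. The one delicate point—that $b=m_t$ itself cannot be used (there both auctions are won and the gap degenerates to $|v_t-v_{t-1}|$), so one must instead take $b\uparrow m_t$ inside the half-open interval—is handled correctly by your limiting argument.
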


\begin{example}\label{ex:example1}
    Assume $v_t\equiv 1$ for $t\in[T]$, and 
    \[
    m_t=\begin{cases}
        0,\quad t\text{ is odd}\\
        \epsilon,\quad t\text{ is even}.
    \end{cases}
    \]
    Then $\sum_{t=2}^T|m_t-m_{t-1}|=(T-1)\epsilon$ while $\sum_{t=2}^T \sup_{b\in[0,1]} |r(b; v_t, m_t) - r(b; v_{t-1}, m_{t-1})|=T-1$.
\end{example}
\begin{example}\label{ex:example2}
    Assume $m_t\equiv c$ for $t\in[T]$ where $c\in[0,1]$, and
    \[
    v_t=\begin{cases}
        0,\quad t\text{ is odd}\\
        1,\quad t\text{ is even}.
    \end{cases}
    \]
    Then $\sum_{t=2}^T|m_t-m_{t-1}|=0$ while $\sum_{t=2}^T\sup_{b\in[0,1]} |r(b; v_t, m_t) - r(b; v_{t-1}, m_{t-1})|=T-1$.
\end{example}
We aim to establish bounds on the dynamic regret rates under two different regularity conditions. Formally, we consider the suprema of the expected dynamic regret over two sets of feasible opponents' highest bid sequences $\sup_{(v_t,m_t)_{t=1}^T\in\cV}\E[\text{DR}_T(\pi)]$ and $\sup_{(v_t,m_t)_{t=1}^T\in\cL}\E[\text{DR}_T(\pi)]$
where the sets $\mathcal{V}$ and $\mathcal{L}$ are defined as follows:
\[
    \cV=\lt\{\{(v_t,m_t)_{t=1}^T\}:\sum_{t=2}^T|m_t-m_{t-1}|\le V_T\rt\},\quad\cL=\lt\{\{(v_t,m_t)_{t=1}^T\}:\sum_{t=2}^T\ind(m_t\neq m_{t-1})\le L_T\rt\}.
    \]
    Here, $\mathcal{V}$ represents the set of opponents' highest bid sequences with variation bounded by $V_T$, while $\mathcal{L}$ represents the set of opponents' highest bid sequences with a limited number of changes, bounded by $L_T$. Before establishing dynamic regret rates, we first present a result that highlights the necessity of assuming sublinear regularity in the time horizon. 

    \begin{proposition}\label{prop:lb1}
        Assume $c_1\in\lt[0,\frac{1}{2}\rt]$, then 
        \begin{itemize}
            \item $V_T\ge c_1T$ implies 
            \[
                \inf_{\pi\in\Pi}\sup_{(v_t,m_t)_{t=1}^T\in \cV}\E\lt[\textnormal{DR}_T(\pi)\rt]\ge c_1T
            \]
            holds for any admissible policy.
            \item $L_T\ge c_1T$ implies 
            \[
                \inf_{\pi\in\Pi}\sup_{(v_t,m_t)_{t=1}^T\in \cL}\E\lt[\textnormal{DR}_T(\pi)\rt]\ge c_1^2T
            \]
            holds for any admissible policy.
        \end{itemize}
    \end{proposition}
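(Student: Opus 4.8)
The plan is to prove both bounds by Yao's minimax principle: since the dynamic regret is linear in the policy's and the adversary's mixed strategies, it suffices to exhibit, for each budget, a \emph{distribution} over instances supported in $\cV$ (resp.\ $\cL$) on which every admissible $\pi\in\Pi$ incurs the claimed expected dynamic regret. The mechanism I would exploit is that the learner must commit to $b_t$ before observing $m_t$: if I arrange the law of $m_t$ to be (conditionally) independent of the information $(\cH_t,U)$ available when $b_t$ is chosen, then in each round the learner pays the single-shot ``price of uncertainty'' of the first-price reward. Concretely, fix $v_t\equiv 1$ and let $m_t$ have CDF $F$ conditionally on $(\cH_t,U)$; since $\E[(1-b_t)\ind(b_t\ge m_t)\mid \cH_t,U]=(1-b_t)F(b_t)$ holds pathwise, the per-round benchmark $\E[1-m_t\mid\cH_t]=\int_0^1 F(u)\,du$ exceeds the best achievable per-round reward $\max_b (1-b)F(b)$ by the elementary gap $g(F)\coloneqq \int_0^1 F-\max_b(1-b)F(b)$, uniformly over $\pi$. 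Summing over the rounds in which this uncertainty is present then drives the whole argument.

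For the $V_T$ bound I would draw $m_t$ i.i.d.\ from a two-point law on $\{0,\theta\}$ and tune the spread $\theta$ and the atom weights so that $g(F)=c_1$ while the per-round variation $\E|m_t-m_{t-1}|$ is at most $c_1$. For symmetric weights and $\theta=2c_1$ one computes directly $\int_0^1 F=1-\tfrac{\theta}{2}$ and $\max_b(1-b)F(b)=1-\theta$, hence $g(F)=\tfrac{\theta}{2}=c_1$, together with $\E|m_t-m_{t-1}|=\tfrac12\theta=c_1$, so the expected total variation is $c_1(T-1)\le V_T$. Because $m_t$ is independent of the history, the single-round inequality applies in every round and the expected dynamic regret is at least $c_1 T$ (up to the lower-order correction discussed below).

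For the $L_T$ bound the constraint caps the number of \emph{switches} rather than the jump sizes, so I would instead use a batch construction, which also realizes the ``construct batches with small regularity and concatenate'' template used for the main lower bounds. Partition $[T]$ into $\lceil c_1 T\rceil\le L_T$ batches, hold $m_t$ constant on each batch, and set each batch's constant to a fresh independent symmetric draw on $\{0,c_1\}$. The switch count is then deterministically at most the number of batches, so the instance lies in $\cL$ with probability one. Within a batch only the first round carries uncertainty (after it the learner has observed the constant batch value and bids it with zero regret; the remaining rounds contribute nonnegatively and are simply discarded), and on that first round $m_t$ is independent of $\cH_t$, so the round contributes at least $g(F)=\tfrac{c_1}{2}$. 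Summing over the $\lceil c_1 T\rceil$ batches gives at least $\tfrac{c_1^2}{2}T=\Omega(c_1^2 T)$; balancing an i.i.d.\ two-point law with switch rate $c_1$ sharpens the per-round gap to $c_1/2\ge c_1^2$ and recovers the stated constant.

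The step I expect to be the main obstacle is enforcing the regularity constraint under randomization, since the budgets constrain the \emph{realized} path $(m_t)$ whereas the constructions make it random. The batch construction settles this for $L_T$ because the switch count is bounded deterministically by the batch count; for $V_T$ I would either batch as well (bounding the realized variation by the number of batches times $\theta$, at the cost of a constant factor) or keep the i.i.d.\ construction and condition on the event $\{\sum_{t\ge2}|m_t-m_{t-1}|\le V_T\}$, which holds with probability $1-o(1)$ by a bounded-differences concentration inequality once the mean variation is set slightly below the budget, absorbing the resulting $o(T)$ loss into the leading term. The second delicate point, which the independence design is meant to neutralize, is that the single-round inequality must hold for \emph{randomized} policies: conditioning on $(\cH_t,U)$ rather than on $\cH_t$ alone makes $b_t$ deterministic given the conditioning while leaving $m_t$ distributed as $F$, so the per-round ``price of uncertainty'' argument goes through verbatim for every $\pi\in\Pi$.
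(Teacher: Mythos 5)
Your framework (Yao's principle plus history\nobreakdash-independent $m_t$, charging each round the gap $g(F)=\int_0^1 F(u)\,du-\max_b(1-b)F(b)$) is the right way to treat randomized policies, but the central computation in the $V_T$ construction is wrong, and the claimed constant is out of reach of this route on part of the stated range. For the symmetric two-point law on $\{0,\theta\}$, bidding $0$ earns $(1-0)F(0)=\tfrac12$, so $\max_b(1-b)F(b)=\max\{\tfrac12,\,1-\theta\}$, \emph{not} $1-\theta$; with $\theta=2c_1$ your gap is $g(F)=\min\{c_1,\tfrac12-c_1\}$, which is strictly below $c_1$ for $c_1>\tfrac14$ and vanishes at $c_1=\tfrac12$. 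No choice of $F$ repairs this: writing $c=\max_b(1-b)F(b)$, one has $\int_0^1F\le\int_0^1\min\{1,\tfrac{c}{1-u}\}\,du=c\ln(1/c)+c$, hence $g(F)\le c\ln(1/c)\le 1/e$. In fact, the learner who bids $b_t=v_tX_t$ with $X_t$ i.i.d.\ with density $1/(1-x)$ on $[0,1-1/e]$ suffers per-round regret at most $v_t/e$ against \emph{every} sequence, so no distribution over instances (i.i.d.\ or not) can force more than $T/e\approx 0.368\,T$; consequently the bound $c_1T$ cannot be established for $c_1\in(1/e,\tfrac12]$ over the randomized class $\Pi$ by your argument or any other. (The constants $c_1T$ and $c_1^2T$ are exactly what an adversary extracts when it may react to the \emph{realized} bid---set $m_t=0$ if $b_t\ge c_1$ and $m_t=c_1$ otherwise---an argument valid only for deterministic policies; your proposal rightly insists on handling randomization, but then these constants are no longer available.) For $c_1\le 1/e$ your route can be salvaged, not with a two-point law, but by drawing $m_t$ i.i.d.\ from the extremal law with CDF $\min\{1,\tfrac{1/e}{1-u}\}$ scaled to $[0,ec_1]$: this has gap exactly $c_1$ and per-round expected variation $2c_1/e<c_1$, and the strict slack is essential---with your law the mean variation is $c_1(T-1)$, leaving only $O(1)$ slack against the budget $c_1T$, so the realized constraint fails with probability near $\tfrac12$ (not $o(1)$ as you assert), and every fix you sketch surrenders lower-order terms, leaving $c_1T-o(T)$ rather than $\ge c_1T$.

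The $L_T$ bullet has the same two defects in milder form. The batch construction is sound---the switch count is within budget deterministically, and the first round of each batch is genuinely independent of the history---but it proves $\lceil c_1T\rceil\cdot\tfrac{c_1}{2}\ge\tfrac{c_1^2}{2}T$, a factor $2$ short of the claim. The proposed sharpening does not close the factor: a \emph{symmetric} two-point i.i.d.\ law switches at rate $\tfrac12$ per round regardless of its spread, so it violates the switch budget for every $c_1<\tfrac12$. The feasible variant is $P(m_t=\theta)=p$ with $p=\theta$ and $p(1-p)=c_1/2$, giving switch rate $2p(1-p)=c_1$ and gap $p(1-p)=c_1/2\ge c_1^2$; but this has zero slack in the budget, so the realized switch count again exceeds $L_T$ with probability near $\tfrac12$, and at the endpoint $c_1=\tfrac12$ (which the statement includes) the inequality $c_1/2\ge c_1^2$ is an equality, leaving no room to absorb the loss incurred in restoring feasibility. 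So what your argument actually establishes is $\bigl(\min\{c_1,\tfrac12-c_1\}\bigr)T-o(T)$ for the first bullet and $\tfrac{c_1^2}{2}T$ for the second; neither matches the constants in the proposition, and for the first bullet no randomized-policy argument can match them once $c_1>1/e$.
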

    
    Based on Proposition \ref{prop:lb1}, a reasonable objective is to achieve sublinear dynamic regret guarantees when either $V_T = o(T)$ or $L_T = o(T)$. We establish the corresponding  upper bounds and lower bounds in Sections  \ref{sec:full_ub} and \ref{sec:full_lb}, respectively.


\subsection{The Optimistic Mirror Descent Framework}
\label{sec:omd}

For online first-price auctions, the learner intends to determine a bid $b_t\in[0,1]$ for each round $t$. For the convenience of algorithmic implementation, we discretize the interval $[0,1]$ into $N$ discrete candidate bidding prices and maintain a probability distribution $p_t$ that governs the probability of selecting the $i$-th discrete bidding price. We can then dynamically adjust the probability of these prices based on their historical performance. Readers familiar with online learning will recognize that we are considering the learning with expert advice framework \citep{cesa2006prediction}, where each expert suggests a potential bidding price.


A key challenge in non-stationary online learning is that the reward sequence may exhibit continuous drifts or abrupt shifts, so the learner might want to adapt more to the local trend. Our algorithms are composed of two ingredients: a restart scheme, pioneered by \citet{besbes2015non}, which decomposes the time horizon into batches satisfying certain criteria; and a static regret minimizer, which is applied to each batch. The static regret minimizer we use can be considered as instantiations of the Optimistic Mirror Descent (OMD) framework developed by \citet{chiang2012online,rakhlin2013optimization,syrgkanis2015fast}.

The OMD framework (as shown in Algorithm \ref{alg:omd}) is a two-stage online mirror descent algorithm. At the beginning of each round $t$, the reward vector $r_t$ is not yet available to the learner, so the learner adopts the first online mirror descent step to incorporate an optimism vector $o_t = o_t(r_1, r_2, \dots, r_{t-1})$ or $\mu_t\cdot\mathbf{1}$, a specific optimism obtained by multiplying a scalar with an all-ones vector, and obtains $p_t$, the probability distribution over $N$ bidding prices. This optimism vector can be considered as the learner's prediction of $r_t$. Of course, the closer $o_t$ is to $r_t$, the smaller static regret rate the learner can achieve. Then the learner chooses $b_t$ by sampling from $p_t$ and receives $r_t$. For the second online mirror descent step, the learner incorporates the actual reward $r_t$ to update the knowledge about the environment, possibly with a second-order correction $a_t$.

For the case where $L_T = o(T)$, we can use Option \rone of Algorithm \ref{alg:omd}, which is reminiscent of the earliest instantiation of OMD \citep{chiang2012online}, where $o_t=r_{t-1}$. Our choice of optimism follows this idea but we incorporate the information of $v_t$ when designing $o_t$ to ensure that the private valuation sequence $(v_t)_{t=1}^T$ does not degrade the regret performance. Also, different from the restart scheme in \citet{besbes2015non} using a fixed batch size, we design an adaptive restart procedure to reduce the transition cost from static regret to dynamic regret, and to achieve the minimax-optimal dynamic regret rate.

\begin{algorithm}[H]
    \SetAlgoLined
    \SetKwInOut{Input}{Input}
    
    \Input{$\cP$ is the convex hull of $\{e_1,\dots,e_N\}$; $\psi(p)$: a convex regularizer defined on the probability simplex. }
    \BlankLine
    $p_1'\leftarrow\argmax_{p\in\cP}-\psi(p)$\;
    \For{$t\leftarrow 1,\dots,T$}{
        Set
        \begin{flushleft}
            \[
        p_t\leftarrow\begin{cases}
            \argmax_{p\in\cP}\lt\{\la p,o_t\ra-D_{\psi}(p,p_t')\rt\}\quad &(\text{Option \rone})\\
            \argmax_{p\in\cP}\lt\{\la p,\mu_t\cdot\mo\ra-D_{\psi}(p,p_t')\rt\}\quad &(\text{Option \rtwo})\\
        \end{cases}
        \]
        \end{flushleft}       
        Choose actions according to $p_t$, receive $r_{t,i}$ for any $i\in[N]$ and set
        \begin{flushleft}
            \[
        a_{t,i}\leftarrow\begin{cases}
            0,\quad&(\text{Option \rone })\\
            4\eta(r_{t,i}-\mu_t)^2,\quad&(\text{Option \rtwo})\\
        \end{cases}
        \]
        \end{flushleft}       
        Update
        \[
            p_{t+1}'\leftarrow\argmax_{p\in\cP}\lt\{\la p,r_t-a_t\ra-D_{\psi}(p,p_t')\rt\}
        \]
    }
    \BlankLine
    \caption{Optimistic Mirror Descent}
    \label{alg:omd}
    \end{algorithm}

The case of $V_T=o(T)$ is more challenging, and we can use Option \rtwo of Algorithm \ref{alg:omd}, which is a variant of OMD by \citet{steinhardt2014adaptivity,wei2018more}. A key feature of Option \rtwo is that the optimism is chosen to be $o_t=\mu_t\cdot\mathbf{1}$, which might appear rigid at first glance. However, due to this choice and since $p$ lies on the probability simplex, we have $\langle p,\mu_t\cdot\mathbf{1}\rangle=\mu_t$, which is constant with respect to $p$. Thus, we obtain $p_t=p_t'$ after the first mirror descent step. Therefore, even though $\mu_t$ depends on $r_t$ $(v_t\text{ and }m_t)$, the variable $p_t$ does not depend on $r_t$, and we indeed comply with the online learning protocol that requires $p_t$ to be chosen before observing $r_t$. Our optimism configuration chooses $\mu_t=\max\{v_t-m_t,0\}$, which is a novel contribution of our work. The $\mu_t$ we choose is not simply targeting the minimization of static regret, but focuses more on achieving a favorable tradeoff between the static regret and the transition cost. This ultimately leads to the minimax-optimal dynamic regret even when using a constant batch size, and we can eliminate the requirement of knowing $V_T$ by employing an adaptive batch size.

    \subsection{Main Results}
    Our main results are summarized as follows:

    \noindent{\scshape{Theorem. }}\textit{(informal) For online first-price auctions,
    \begin{itemize}
        \item consider the set of auction sequences such that $\sum_{t=2}^T|m_t-m_{t-1}|\le V_T$, then one can apply Algorithm \ref{alg:full_vt} to achieve $\tO(\sqrt{TV_T})$ expected dynamic regret (Theorem \ref{thm:full_vt_ub}, Section \ref{sec:full_vt}). Besides, any non-anticipatory policy suffers $\Omega(\sqrt{TV_T})$ expected dynamic regret (Theorem \ref{thm:full_lb}, Section \ref{sec:full_vt_lb}).
        \item consider the set of auction sequences such that $\sum_{t=2}^T\ind(m_t\ne m_{t-1})\le L_T$, then one can apply Algorithm \ref{alg:full_lt} to achieve $\tO(L_T)$ expected dynamic regret (Theorem \ref{thm:full_lt1}, Section \ref{sec:full_lt}). Besides, any non-anticipatory policy suffers $\Omega(L_T)$ expected dynamic regret (Theorem \ref{thm:full_lt2}, Section \ref{sec:full_lt_lb}).
        \item consider an auction instance such that $\sum_{t=2}^T|m_t-m_{t-1}|\le V_T$ and $\sum_{t=2}^T\ind(m_t\ne m_{t-1})\le L_T$, then Algorithm \ref{alg:best} achieves $\tO(\sqrt{TV_T},L_T)$ best-of-both-worlds dynamic regret (Theorem \ref{thm:bobw}, Section \ref{sec:bobw}).
    \end{itemize}
    }

\section{Dynamic Regret Upper Bounds}
\label{sec:full_ub}
In this section, we explore how to achieve minimax-optimal dynamic regret guarantees under the conditions $V_T=o(T)$ or $L_T=o(T)$. Our algorithms consist of two main components: a static regret minimizer based on Optimistic Mirror Descent with a carefully chosen optimism vector to handle the one-sided Lipschitzness of the reward function, and a restart scheme with adaptive batch sizes to adapt to the unknown $V_T$ without prior knowledge or to achieve an improved dynamic regret rate (in regimes where $L_T=o(T)$). Finally, for a specific auction problem instance, it is not a priori clear which regularity metric on the sequence of the opponents' highest bids leads to a smaller dynamic regret, and we use the meta algorithm by \citet{sani2014exploiting} to establish a best-of-both-worlds dynamic regret guarantee.





\subsection{Dynamic Regret Rates under the Temporal Variation Constraint}
\label{sec:full_vt}
We first focus on the case where $V_T=o(T)$. In Section \ref{sec:full_why}, we provide a step-by-step illustration of why previous approaches do not work in a straightforward adaptation. In Section \ref{sec:optimal}, we discuss our minimax-optimal policy, particularly how to design the optimism and how to restart the algorithm with adaptive batch sizes to eliminate the requirement for knowing $V_T$ in advance.

\subsubsection{Why Existing Works Do Not Directly Apply?}
\label{sec:full_why}
In this section, we discuss several previous approaches that do not work in a straightforward manner for achieving optimal dynamic regret in our setting. These include: (i) the policy proposed by \citet{jadbabaie2015online} for achieving optimal dynamic regret for convex and Lipschitz functions, (ii) restarting the Hedge algorithm with a fixed batch size, and (iii) restarting the policy from \citet{zhang2022leveraging} with a fixed batch size. None of these approaches achieve the optimal dynamic regret rates in our context. Specifically, the approach in \citet{jadbabaie2015online} heavily relies on the Lipschitzness and cannot handle the one-sided Lipschitz reward. The restart Hedge approach fails to adapt to the slowly varying trend of the opponents' highest bid sequence. While the approach in \citet{zhang2022leveraging} possesses some ability to adapt to the opponents' highest bid sequence, it lacks sufficient flexibility to optimally balance the static regret and the transition cost, thus leading to a suboptimal dynamic regret rate.




We begin by briefly reviewing the setting of online convex optimization (OCO), as the policy proposed by \citet{jadbabaie2015online} is developed within this framework. OCO models a sequential decision problem as a $T$ round zero-sum game between a learner and an adversary. At round $t$, the learner chooses $x_t$ from $\cX$, a convex decision set and the adversary reveals $f_t$, a convex loss function. An OCO algorithm $\cA$ (possibly randomized) maps the historical losses to the current decision: $x_t = \cA(f_1, \dots, f_{t-1}) \in \cX$. The static regret of OCO is defined as 
\[
\E[\text{R}_T(\pi)]=\sumt \E[f_t(x_t)]-\min_{x\in\cX}\sumt f_t(x).
\]
We refer to $\min_{x \in \cX} \sumt f_t(x)$ as the \emph{static benchmark} of OCO. Inspired by non-stationary stochastic optimization problems, \citet{besbes2015non} observe that $\sumt \min_{x_t^* \in \cX} f_t(x_t^*)$ (which they term the \emph{dynamic benchmark}) forms a strictly stronger benchmark. The dynamic regret can be defined as:
\[
\E[\text{DR}_T(\pi)]=\sumt \E[f_t(x_t)]-\sumt \min_{x_t^* \in \cX} f_t(x_t^*).
\]

It is well-known \citep{besbes2015non,jadbabaie2015online,yang2016tracking} that the dynamic regret cannot be sublinear in $T$ if the loss functions $f_1, f_2, \dots, f_T$ are chosen arbitrarily. A common assumption, considered by \citet{besbes2015non,jadbabaie2015online}, constrains the temporal variation of the loss sequence to be sublinear in $T$. More precisely, it is assumed that $V_T \coloneqq \sum_{t=2}^T \|f_t - f_{t-1}\|_\infty = o(T)$, where $\|f_t - f_{t-1}\|_\infty \coloneqq \sup_{x \in \mathcal{X}} |f_t(x) - f_{t-1}(x)|$. In the case of exact gradient feedback, an $O(V_T)$ upper bound can be achieved \citep{jadbabaie2015online} by submitting $x_t = \argmin_{x \in \mathcal{X}} f_{t-1}(x)$. With noisy gradients, an $O(T^{2/3}V_T^{1/3})$ bound is achievable by restarting the OGD algorithm with a fixed batch size \citep{besbes2015non}. The dynamic regret guarantees of both policies are minimax-optimal.

Here, we consider a one-sided Lipschitz reward function, which presents a significantly greater challenge than convex loss functions. However, we operate in a noiseless setting where $m_t$ is revealed exactly. This aligns more closely with the setting in \citet{jadbabaie2015online}. Following this line of reasoning, one might consider the bidding strategy $b_t=\argmax_{b\in[0,1]}r(b;v_{t-1},m_{t-1})$. However, the following example illustrates why this approach is insufficient.

\begin{example}
    Suppose the learner bids $b_t = \argmax_{b \in [0,1]} r(b; v_{t-1}, m_{t-1})$ while the adversary chooses $v_t \equiv 1$ and $m_t = \frac{t}{T}$ for $t \in [T]$. Then the learner suffers $\Omega(T)$ dynamic regret.
    This occurs because, with monotonically increasing $m_t$, the bidder consistently underbids and receives zero revenue due to the one-sided Lipschitz property, while the dynamic benchmark bidding $b_t^* = m_t$ wins every auction and accumulates revenue of $1-\frac{t}{T}$ at round $t$.
\end{example}
We now explore more advanced techniques to address this problem. A key challenge in non-stationary online learning is that the reward sequence may exhibit continuous drifts or abrupt shifts, which diminishes the reliability of older data. Consequently, many existing approaches incorporate mechanisms to "forget" old data, either explicitly or implicitly.  In this work, we focus on the restart scheme proposed by \citet{besbes2015non,besbes2019optimal}, partitioning the time horizon $T$ into $n$ batches, denoted by $\mathcal{T}_j$, each of length $\Delta_{T,j}$. While \citet{besbes2015non} consider fixed batch lengths, we allow varying lengths for greater flexibility. Adapting \citet[Proposition 2]{besbes2015non} to our online first-price auction problem, the dynamic regret can be decomposed as follows:
\begin{equation}\label{eq:decomp1}
    \begin{split}
        &\E[\text{DR}_T(\pi)]=\sup_{b_1^*,\dots,b_T^*\in[0,1]}\sumt\lt(r(b_t^*;v_t,m_t)-\E[r(b_t;v_t,m_t)]\rt)\\
=&\sum_{j=1}^n\lt(\max_{f\in\tF}\sumtj r(f(v_t);v_t,m_t)-\sumtj \E[r(b_t;v_t,m_t)]\rt)\\
&+\sum_{j=1}^n\lt(\sumtj r(b_t^*;v_t,m_t)-\max_{f\in\tF}\sumtj r(f(v_t);v_t,m_t)\rt)\\
\coloneqq&\sum_{j=1}^n\cS^{\cA}(\tF,\cT_j)+\sum_{j=1}^n\mathcal{C}(\tF,\cT_j).\\
    \end{split}
\end{equation}
We decompose the dynamic regret over the time horizon $T$ into contributions from $n$ batches. The dynamic regret within each batch $\mathcal{T}_j$ is further decomposed into the static regret and a transition cost. Specifically, $\mathcal{S}^{\mathcal{A}}(\tF, \mathcal{T}_j)$ denotes the static regret of algorithm $\mathcal{A}$ applied to batch $\mathcal{T}_j$ against the best fixed policy in a policy class $\tF$. The term $\mathcal{C}(\tF, \mathcal{T}_j)$ represents the transition cost from static to dynamic regret for batch $\mathcal{T}_j$ and policy set $\tF$. 

To demonstrate the application of the decomposition in Equation \eqref{eq:decomp1} for achieving sublinear dynamic regret, we consider using the restart scheme with the Hedge algorithm as $\mathcal{A}$. We partition the time horizon into batches of equal length $\Delta_T$, with the possible exception of the last batch. Let $\tF$ be the set of constant policies, i.e., $\tF \coloneqq \{f(v; \tau) = \tau | \tau \in [0,1]\}$. Then, the following proposition holds:
\begin{proposition}\label{prop:restart_hedge}
    Assume $V_T=o(T)$ and is known, $V_T^v\coloneqq\sum_{t=2}^T|v_t-v_{t-1}|=o(T)$, then  restarting the Hedge policy every $\Delta_T$ rounds, where $\Delta_T=O\lt(\lt(\frac{T}{V_T+V_T^v}\rt)^{\frac{2}{3}}\rt)$ achieves $\tO\lt(T^{\frac{2}{3}}(V_T+V_T^v)^{\frac{1}{3}}\rt)$ dynamic regret.
\end{proposition}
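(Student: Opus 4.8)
The plan is to start from the dynamic-regret decomposition in Equation \eqref{eq:decomp1}, namely $\E[\text{DR}_T(\pi)] = \sumj \cS^{\cA}(\tF,\cT_j) + \sumj \cC(\tF,\cT_j)$ over $n=\lceil T/\Delta_T\rceil$ equal-length batches (with $\tF$ the constant policies), and to bound the two sums separately before optimizing $\Delta_T$. The single structural fact I would lean on throughout is that, by \eqref{eq:bt_star}, the per-round optimal bid has the closed form $b_t^*=\min\{v_t,m_t\}$; this is what lets the one-sided Lipschitzness be exploited quantitatively.

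For the static-regret sum, on each batch the Hedge algorithm over the $N$ discretized prices enjoys the textbook bound $O(\sqrt{\Delta_T\log N})$ against the best expert. To pass from the best discrete expert to the best continuous constant policy in $\tF$, I would bound the discretization gap: letting $\tau^\ast$ be the best constant bid on the batch and $\tau'$ the smallest grid point above $\tau^\ast$, one-sided Lipschitzness gives $r(\tau';v_t,m_t)\ge r(\tau^\ast;v_t,m_t)-1/N$ for every $t$ (overbidding by at most $1/N$ when $\tau^\ast\ge m_t$, and never hurting when $\tau^\ast<m_t$ since the reward there is nonnegative), so the gap is at most $\Delta_T/N$ per batch. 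Summing over the $n$ batches gives $\sumj \cS^{\cA}(\tF,\cT_j)=\tO\lt(T/\sqrt{\Delta_T}\rt)+T/N$, and taking $N=\Theta(T)$ absorbs the discretization term into a poly-logarithmic factor.

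The crux is the transition cost, and I expect this to be the main obstacle. The generic Besbes-style bound $\cC(\tF,\cT_j)\le 2\sumtj\|g_t-g_s\|_\infty$ in terms of the reward variation $g_t(\tau)=r(\tau;v_t,m_t)$ is useless here, because $\|g_t-g_{t-1}\|_\infty$ can be $\Theta(1)$ even when $m_t,v_t$ barely move (this is exactly Example \ref{ex:example1}). Instead I would exhibit an explicit good constant policy, $\tau=\bar b_j^\ast\coloneqq\max_{t\in\cT_j}b_t^\ast$ (the largest optimal bid in the batch), and bound its per-round loss directly via a short case analysis on $b_t^\ast=\min\{v_t,m_t\}$: when $v_t\ge m_t$ the policy wins and overpays by $\bar b_j^\ast-m_t=\bar b_j^\ast-b_t^\ast$; when $v_t<m_t$ and $\bar b_j^\ast<m_t$ the policy loses and matches the benchmark value $0$; and in the delicate subcase $v_t<m_t$ yet $\bar b_j^\ast\ge m_t$, the policy wins at a loss of $\bar b_j^\ast-v_t=\bar b_j^\ast-b_t^\ast$. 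In every case the loss is at most $\bar b_j^\ast-b_t^\ast\le \bar b_j^\ast-\min_{t\in\cT_j}b_t^\ast$, i.e. at most the within-batch path length of $b_t^\ast$. Since $x\mapsto\min\{x,\cdot\}$ is $1$-Lipschitz, $|b_t^\ast-b_{t-1}^\ast|\le|v_t-v_{t-1}|+|m_t-m_{t-1}|$, so this path length is at most $V_{\cT_j}+V_{\cT_j}^v$, giving $\cC(\tF,\cT_j)\le \Delta_T\lt(V_{\cT_j}+V_{\cT_j}^v\rt)$. Summing and using $\sumj V_{\cT_j}\le V_T$ and $\sumj V_{\cT_j}^v\le V_T^v$ yields $\sumj\cC(\tF,\cT_j)\le \Delta_T(V_T+V_T^v)$.

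Combining the two bounds gives $\E[\text{DR}_T(\pi)]=\tO\lt(T/\sqrt{\Delta_T}\rt)+\Delta_T(V_T+V_T^v)$, and balancing the two terms by setting $\Delta_T=\Theta\lt((T/(V_T+V_T^v))^{2/3}\rt)$ produces the claimed $\tO\lt(T^{2/3}(V_T+V_T^v)^{1/3}\rt)$ rate; the hypotheses $V_T=o(T)$ and $V_T^v=o(T)$ ensure $1\ll\Delta_T\ll T$ so that the batching is well defined, and knowledge of $V_T+V_T^v$ is what allows $\Delta_T$ to be tuned. As noted, the only nonroutine step is the transition-cost bound: the generic reward-variation argument collapses under mere one-sided Lipschitzness, and the replacement is precisely the explicit $\tau=\bar b_j^\ast$ construction together with the identity $b_t^\ast=\min\{v_t,m_t\}$, whose per-round case analysis must be done carefully so that the loss is controlled by the path length of the maximizers rather than by the $\Theta(1)$ jumps of the reward functions.
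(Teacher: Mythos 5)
You follow the same route the paper sketches in Equation \eqref{eq:dynamic_regret_initial}: restart Hedge over a grid of constant bids on equal-length batches, decompose via Equation \eqref{eq:decomp1}, bound the per-batch transition cost by $\Delta_T(V_{T,j}+V_{T,j}^v)$, and balance $\Delta_T$. Your transition-cost argument---the comparator $\bar b_j^*=\max_{t\in\cT_j}b_t^*$, the identity $b_t^*=\min\{v_t,m_t\}$ from \eqref{eq:bt_star}, and the three-case analysis bounding each round's loss by the within-batch path length of $b_t^*$---is correct, and it is indeed the non-routine part. The genuine gap is in the static-regret step: the claim that the best grid point is within $\Delta_T/N$ of the best constant policy in $\tF$. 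Your justification, ``never hurting when $\tau^*<m_t$ since the reward there is nonnegative,'' is false, because constant policies may bid above the value and thus earn \emph{negative} reward: if $\tau^*<m_t\le\tau'$ and $v_t<\tau'$, the rounded-up policy wins a worthless auction and loses $\tau'-v_t$, which is not $O(1/N)$. Concretely, within one batch let half the rounds have $(v_t,m_t)=\left(1,\tfrac12+\tfrac{1}{3N}\right)$ and half have $(v_t,m_t)=\left(0,\tfrac12+\tfrac{2}{3N}\right)$, with $\tfrac12$ a grid point. The constant $\tau^*=\tfrac12+\tfrac{1}{3N}$ earns about $\Delta_T/4$, but every grid point earns at most $0$: points $\le\tfrac12$ lose every auction, while points $\ge\tfrac12+\tfrac1N$ also win all the $v_t=0$ auctions and end with a negative total. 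Hence $\cS^{\cA}(\tF,\cT_j)$ can be $\Theta(\Delta_T)$, and your claimed bound $\tO\lt(T/\sqrt{\Delta_T}\rt)+T/N$ on the static sum fails as stated. (The proposition itself is not contradicted---such instances force $V_{T,j}^v=\Theta(\Delta_T)$---but your proof asserts the static bound unconditionally.)

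The repair stays entirely within your own toolkit: do not factor through the best continuous constant at all. Round your comparator $\bar b_j^*$ up to the nearest grid point $\tau'\le\bar b_j^*+\tfrac1N$ and run your case analysis against the \emph{dynamic benchmark} directly: when $v_t\ge m_t$ the loss is $\tau'-m_t\le(\bar b_j^*-b_t^*)+\tfrac1N$; when $v_t<m_t$ and $\tau'<m_t$ the loss is $0$; and in the delicate subcase $v_t<m_t\le\tau'$ the loss is $\tau'-v_t=\tau'-b_t^*\le(\bar b_j^*-b_t^*)+\tfrac1N$, precisely because there $b_t^*=v_t$. This shows the dynamic benchmark exceeds the best \emph{grid} point by at most $\Delta_T\lt(V_{T,j}+V_{T,j}^v+\tfrac1N\rt)$ on each batch; since the dynamic benchmark dominates every constant policy, this subsumes the discretization step entirely. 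Combining with Hedge's $O(\sqrt{\Delta_T\log N})$ regret against the best grid point, your choice $N=\Theta(T)$ and the balancing $\Delta_T=\Theta\lt((T/(V_T+V_T^v))^{2/3}\rt)$ go through verbatim. The moral is that under one-sided Lipschitzness the rounding error must be charged to the transition cost, where your path-length machinery absorbs it, rather than to the static regret, where no unconditional $O(1/N)$ bound exists.
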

Finally, we examine the results of \citet{zhang2022leveraging}, which study online first-price auctions where a hint $h_t$ is provided before deciding the bid $b_t$. The hint satisfies $\E\lt[|h_t-m_t|^q\rt]\le\sigma_t^q$
for any $t\in[T]$. The \emph{single hint setting} is considered in their work, where they assume an upper bound on $\sum_{t=1}^T \sigma_t$ is available. Our problem can be viewed as a special case of the single hint setting by choosing $h_t = m_{t-1}$, $q = 1$, and $V_T = \sum_{t=1}^T \sigma_t$. Then, \citet[Theorem 2]{zhang2022leveraging} demonstrate that, when $v_t \equiv 1$, there exists an algorithm that guarantees the following static regret bound:
\begin{equation}\label{eq:zhang}
\E[\text{R}_T(\pi)] = \max_{f \in \cFl} \sum_{t=1}^T r(f(v_t); v_t, m_t) - \sum_{t=1}^T \E\left[r(b_t; v_t, m_t)\right] = \tO\left(T^{\frac{1}{4}} V_T^{\frac{1}{4}}\right),
\end{equation}
where $\cFl$ is the set of $1$-Lipschitz policies $f: [0, 1] \to [0, 1]$.

By combining the restart scheme with the algorithm in \citet[Theorem 2]{zhang2022leveraging}, we obtain the following result. 

\begin{proposition}\label{prop:restart_zhang}
    Assume $v_t\equiv 1$ holds for $t\in[T]$ and $V_T=o(T)$ is known, then the learner can restart the algorithm in \citet[Theorem 2]{zhang2022leveraging} every $\Delta_T$ rounds, where $\Delta_T=O\lt(\lt(\frac{T}{V_T}\rt)^{\frac{1}{2}}\rt)$ to achieve $\tO\lt(\sqrt{TV_T}\rt)$ dynamic regret.
\end{proposition}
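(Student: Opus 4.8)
The plan is to combine the restart scheme with the dynamic regret decomposition in Equation \eqref{eq:decomp1}, instantiating the comparator class as $\tF = \cFl$ (the class for which the static regret bound \eqref{eq:zhang} is stated) and partitioning $[T]$ into $n = \lceil T/\Delta_T \rceil$ batches $\cT_1,\dots,\cT_n$ of equal length $\Delta_T$ (except possibly the last). This splits the dynamic regret into $\sum_{j} \cS^{\cA}(\cFl,\cT_j) + \sum_{j}\cC(\cFl,\cT_j)$. I would bound the two sums separately and then choose $\Delta_T$ to balance them.

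For the static-regret sum, I would restart the algorithm of \citet{zhang2022leveraging} at the start of each batch, so that on $\cT_j$ it faces a fresh length-$\Delta_T$ instance with single-hint budget $V_{T,j} \coloneqq \sum_{t\in\cT_j,\,t\ge 2}|m_t-m_{t-1}|$ under the choice $h_t=m_{t-1}$, $q=1$. Applying \eqref{eq:zhang} on each batch gives $\cS^{\cA}(\cFl,\cT_j)=\tO(\Delta_T^{1/4}V_{T,j}^{1/4})$. Summing over the $n$ batches and applying H\"older's inequality together with $\sum_{j=1}^n V_{T,j}\le V_T$ yields $\sum_{j}\cS^{\cA}(\cFl,\cT_j)=\tO(\Delta_T^{1/4}\,n^{3/4}\,V_T^{1/4})=\tO(T^{3/4}\Delta_T^{-1/2}V_T^{1/4})$.

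For the transition cost, the hypothesis $v_t\equiv 1$ is exactly what tames the one-sided Lipschitz reward: since $v_t=1\ge m_t$, the per-round optimal bid is $b_t^*=m_t$ and the dynamic benchmark contributes $1-m_t$ at round $t$. I would bound $\cC(\cFl,\cT_j)$ by comparing against the single constant bid $m_{\max,j}\coloneqq\max_{t\in\cT_j}m_t\in[0,1]$, which is realizable by a constant (hence $1$-Lipschitz) policy and wins every auction in the batch. This gives $\cC(\cFl,\cT_j)\le\sum_{t\in\cT_j}(m_{\max,j}-m_t)$, and since $m_{\max,j}-m_t\le V_{T,j}$ for each $t\in\cT_j$ (the gap to the batch maximum is dominated by the within-batch total variation), I obtain $\cC(\cFl,\cT_j)\le\Delta_T V_{T,j}$, hence $\sum_{j}\cC(\cFl,\cT_j)\le\Delta_T V_T$. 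This is precisely the step where only one-sided Lipschitzness is invoked: the reward variation is controlled through the variation of the maximizers $m_t$, as flagged in the footnote, without any appeal to full Lipschitzness of $r(\cdot;v_t,m_t)$ in the bid.

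Balancing the two bounds $T^{3/4}\Delta_T^{-1/2}V_T^{1/4}$ and $\Delta_T V_T$ by setting $\Delta_T=\Theta((T/V_T)^{1/2})$ equalizes them at $\Theta(\sqrt{TV_T})$, which gives the claimed $\tO(\sqrt{TV_T})$ dynamic regret. The main obstacle I anticipate is the transition-cost step: because $\cFl$ is strictly richer than the constant policies, I must argue that restricting the comparator to the single bid $m_{\max,j}$ still produces a valid \emph{upper} bound on $\cC$ (it does, since the inner $\max$ over $f\in\cFl$ only lowers the transition cost), and I must confirm that $m_{\max,j}-m_t$ is bounded by the \emph{within-batch} variation $V_{T,j}$ rather than the global $V_T$. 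A minor point also arises at batch boundaries, where the restarted hint $h_t=m_{t-1}$ crosses from the previous batch; since this affects at most one round per batch and there are $n=\Theta(\sqrt{TV_T})$ batches, the aggregate effect is $O(\sqrt{TV_T})$ and is absorbed into the $\tO$.
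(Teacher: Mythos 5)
Your proposal is correct and takes essentially the same route as the paper: decompose via Equation \eqref{eq:decomp1} over batches of length $\Delta_T$, apply the bound \eqref{eq:zhang} per batch to get $\tilde{O}\left(\Delta_T^{1/4}V_{T,j}^{1/4}\right)$ static regret and sum via H\"older's inequality, bound the transition cost by $\Delta_T V_{T,j}$ through the constant bid $\max_{t\in\mathcal{T}_j}m_t$ (valid exactly because $v_t\equiv 1$ makes every constant bid admissible and winning), and balance the two terms at $\Delta_T=\Theta\left(\sqrt{T/V_T}\right)$. Your arithmetic and the handling of batch-boundary hints are sound, so there is no substantive difference to report.
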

    
Proposition \ref{prop:restart_hedge} establishes an $\tilde{O}(T^{2/3}(V_T + V_T^v)^{1/3})$ upper bound on the dynamic regret. However, this bound is suboptimal compared to the $\Omega(\sqrt{TV_T})$ lower bound presented in Theorem \ref{thm:full_lb}. While Proposition \ref{prop:restart_zhang} achieves the optimal rate, it relies on the restrictive assumption that $v_t \equiv 1$ for $t\in[T]$. Although \citet{zhang2022leveraging} consider varying $v_t$ as well, they employ the ChEW policy \citep{han2020learning}, which is an inefficient variant of the SEW policy (also from \citet{han2020learning}), to achieve $\tilde{O}(\sqrt{T})$ static regret. Directly combining this rate with the restart scheme and following the proof of Proposition \ref{prop:restart_zhang} results in a dynamic regret of $\tO\lt(T^{\frac{2}{3}}V_T^{\frac{1}{3}}\rt)$, which is still suboptimal.

Consequently, achieving the optimal dynamic regret rate for varying $v_t$ using existing approaches remains an open problem. We will explore alternative methodologies to address this.




\subsubsection{Minimax-Optimal Policy and Parameter-free Scheme.}
\label{sec:optimal}
In this section, we investigate how to achieve the minimax-optimal dynamic regret upper bound. 
Our main approach is to design a suitable restart scheme that employs the framework of Optimistic Mirror Descent (Algorithm \ref{alg:omd}, Option \rtwo) as the static regret minimizer. The key technical contribution is to provide a novel optimism configuration $\mu_t=\max\{v_t-m_t,0\}$, which yields a favorable balance between the static regret and the transition cost, thus leading to the optimal dynamic regret rate. When $V_T$ is known, a constant batch size suffices to achieve the optimal dynamic regret. When $V_T$ is unknown, we can employ an adaptive batch size (Algorithm \ref{alg:full_vt}) to achieve the optimal dynamic regret as well. We provide insights and details about the optimal policy below.

Recall that the proof in Proposition \ref{prop:restart_hedge} is based on the following argument:
\begin{equation} \label{eq:dynamic_regret_initial}
    \begin{split}
        \E[\text{DR}_{T}(\pi)] &=\sum_{j=1}^n\cS^{\cA}(\tF,\cT_j)+\sum_{j=1}^n\mathcal{C}(\tF,\cT_j)\\
        &\le \left\lceil \frac{T}{\Delta_T} \right\rceil \cdot \tilde{O}\left(\sqrt{\Delta_T}\right) + \Delta_T (V_T+V_T^v) \\
        &= \tilde{O}\left(\frac{T}{\sqrt{\Delta_T}} + \Delta_T (V_T+V_T^v)\right) = \tilde{O}\left(T^{\frac{2}{3}} (V_T+V_T^v)^{\frac{1}{3}}\right)
    \end{split}
\end{equation}
with optimal tuning of the batch size $\Delta_T$. The $O(\sqrt{\Delta_T})$ static regret achieved with this tuning, while minimax-optimal for each batch $j \in [n]$, is not tight when batch $j$'s temporal variation, $V_{T,j} = \sum_{t \in \cT_j} |m_t - m_{t-1}|$, is significantly smaller than $\Delta_T$. For instance, if the values $m_t$ are constant within batch $\mathcal{T}_j$, we expect $O(1)$ static regret rather than the minimax-optimal $O(\sqrt{\Delta_T})$. This observation leads us to investigate the existence of online learning policies with static regret bounds that scale with the temporal variation of the opponents' highest bid sequence $(m_t)_{t=1}^T$. In the machine learning theory community, this question aligns with the concept of \emph{adaptive online learning} \citep{cesa2007improved,rakhlin2013optimization,wei2018more}, which focuses on achieving static regret guarantees that scale with the "complexity" of the input data.

Inspired by this observation and the $\Omega(\sqrt{TV_T})$ lower bound that we will establish in Section \ref{sec:full_vt_lb}, we conjecture that an improved dynamic regret bound can be achieved by considering:
\begin{equation} \label{eq:dynamic_regret_improved}
\begin{split}
\E[\text{DR}_{T}(\pi)] &=\sum_{j=1}^n\cS^{\cA'}(\cF',\cT_j)+\sum_{j=1}^n\mathcal{C}(\cF',\cT_j)\\
&\stackrel{?}{\le} \sum_{j=1}^{\lceil{T/\Delta_T}\rceil} \tilde{O}\left(\Delta_T V_{T,j} + 1\right) + \Delta_T V_T \\
&= \tilde{O}\left(\Delta_T V_T+\frac{T}{\Delta_T}\right)+ \Delta_T V_T = \tilde{O}\left(\Delta_T V_T+\frac{T}{\Delta_T}\right) = \tilde{O}\left(\sqrt{T V_T}\right),
\end{split}
\end{equation}
where we replace the minimax-optimal policy $\mathcal{A}$ and class $\tF$ with a potentially different policy $\mathcal{A}'$ and class $\mathcal{F}'$, aiming for a regret guarantee that scales with the intra-batch temporal variation. The step marked with $\stackrel{?}{\le}$ is the crux of our approach and requires establishing that the static regret can indeed scale with $V_{T,j}$ within each batch (to be elaborated in the sequel).

While it may initially seem surprising that such an adaptive policy could improve dynamic regret, given that $\Delta_T V_{T,j}$ can exceed $\sqrt{\Delta_T}$ for some $j \in [n]$, the adaptive nature of $\mathcal{A}'$ and the fact that $\sum_{j=1}^n V_{T,j} \le V_T$ allow for a more favorable balance between the overall static regret $\sum_{j=1}^n \mathcal{S}^{\mathcal{A}'}(\mathcal{F}', \mathcal{T}_j)$ and the overall transition cost $\sum_{j=1}^n \mathcal{C}(\mathcal{F}', \mathcal{T}_j)$. This permits a more aggressive choice of $\Delta_T$, leading to an improved dynamic regret rate. We refer to this idea as "adaptive balancing," as it leverages adaptive online learning algorithms to balance the scales of the static regret and the transition cost.

To achieve an $\tilde{O}(\Delta_T V_{T,j} + 1)$ static regret bound, we require an algorithm satisfying two conditions: (i) its regret should scale with the temporal variation of the sequence $(m_t)_{t\in\cT_j}$, 
and (ii) it should be customizable to facilitate adaptive balancing. The OMD framework \citep{chiang2012online,rakhlin2013optimization} fulfills both requirements. In particular, we focus on Option \rtwo of Algorithm \ref{alg:omd}, which implies a static regret bound of the form $O\lt(\sqrt{\sum_{t=1}^T (r_{t,i^*} - \mu_t)^2 \ln N}\rt)$ \citep{steinhardt2014adaptivity,wei2018more}, where $i^*$ is the index of the optimal expert in hindsight. As mentioned in Section \ref{sec:omd}, the optimism vector $\mu_t\cdot\mathbf{1}$ plays an important role in balancing the static regret and the transition cost. We choose
\begin{equation}\label{eq:opt}
    \mu_t = \max\{v_t - m_t, 0\},
\end{equation}
which is a novel contribution of this work. We provide intuition about how we derive this optimism below. Notably, this choice of $\mu_t$ coincides with $r(b_t^*; v_t, m_t)$ in Equation \eqref{eq:dynamic}, which helps to relate the static regret and the transition cost—a point that will be more transparent in the proof of Theorem \ref{thm:full_vt_ub}.

As discussed in Section \ref{sec:omd}, when the optimism vector is a constant times an all-ones vector, such as $\mu_t\cdot\mathbf{1}$, $\mu_t$ can depend on $r_t$, the reward at round $t$. Since the reward $r_t$ depends on both the private valuation $v_t$ and the opponents' highest bid $m_t$, it is natural to parametrize $\mu_t$ as a function of these two variables. We assume $\mu_t=\mu(v_t,m_t)$, which turns out to make our theory work after some calculations. Next, we discuss how to determine the optimism $\mu_t=\mu(v_t,m_t)$.


 When we restrict our focus to $\mathcal{T}_j$, the $j$-th batch, we need an algorithm with regret upper bounded by $\tilde{O}(1+\Delta_T V_{T,j})$, as illustrated in Equation \eqref{eq:dynamic_regret_improved}. We analyze the problem instance in Example \ref{ex:optimism}, which contains several parameters like $v,m$ and $\tilde{m}$. By examining different regimes of these parameters, we find that choosing $\mu_t$ as in Equation \eqref{eq:opt} is indeed reasonable. This optimism can be combined with the fact that $i^*$ is the optimal expert to show the desired adaptive static regret bound.
While here we gain insights using special examples, later we find this optimism indeed works in general. Therefore, we can achieve $\tilde{O}(1+\Delta_T V_{T,j})$ static regret by combining Option \rtwo of Algorithm \ref{alg:omd} with Equation \eqref{eq:opt}.

\begin{example}\label{ex:optimism}
    Consider the following first-price auction instance on batch $\cT_j$, $v_t\equiv v$ for $t\in\mathcal{T}_j$ and 
\[
(m_t)_{t\in\mathcal{T}_j}=(\underbrace{m,m,\dots,m}_{T_1 \text{ copies }}, \underbrace{\tilde{m},\tilde{m},\dots,\tilde{m}}_{T_2 \text{ copies}}),
\]
where $T_1+T_2=\Delta_T$, the batch size.
\end{example}

However, computing $p_{t+1}'$ in Algorithm \ref{alg:omd} with Option \rtwo requires solving a convex optimization problem, which can be computationally expensive. Therefore, we employ the Prod forecaster \citep{cesa2007improved}, which offers the same $O(\sqrt{\sum_{t=1}^T (r_{t,i^*} - \mu_t)^2 \ln N})$ regret guarantee with more efficient updates:
    \begin{equation} \label{eq:prod_forecaster}
    p_1 = \left(\frac{1}{N},\dots,\frac{1}{N}\right), \qquad p_{t+1,i} = \frac{(1 + \eta (r_{t,i} - \mu_t)) p_{t,i}}{\sum_{j=1}^N (1 + \eta (r_{t,j} - \mu_t)) p_{t,j}},
    \end{equation}

Furthermore, the dynamic regret bound in Proposition \ref{prop:restart_hedge} has an undesirable dependence on $V_T^v\coloneqq\sum_{t=2}^T|v_t-v_{t-1}|$. We aim to eliminate this dependence, which arises from the one-sided Lipschitz property of the reward function:
    \begin{lemma}\label{lem:lipsv1}\citep{han2020learning}
        For any $v,m\in[0,1]$, $b\leq \min\{v,b'\}$,
        \[
        r(b;v,m)-r(b';v,m)\leq b'-b.
        \]
    \end{lemma}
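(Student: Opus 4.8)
The plan is to prove the inequality by a direct case analysis on the position of the lower bid $b$ relative to the threshold $m$, exploiting the explicit form $r(b;v,m)=(v-b)\cdot\ind(b\ge m)$. The hypothesis $b\le\min\{v,b'\}$ supplies the two facts I will use repeatedly: $b\le v$ and $b\le b'$.

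First I would treat the case $b\ge m$. Since $b\le b'$, this immediately forces $b'\ge m$ as well, so both indicators equal $1$ and the two rewards are $v-b$ and $v-b'$ respectively. Their difference is then exactly $(v-b)-(v-b')=b'-b$, so the inequality holds with equality. This is the regime where the learner wins under both bids and the reward is genuinely Lipschitz, incurring no slack.

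Next I would treat the complementary case $b<m$, in which $\ind(b\ge m)=0$ and hence $r(b;v,m)=0$, so the claim reduces to $-r(b';v,m)\le b'-b$. I would further split on whether $b'\ge m$. If $b'<m$, then $r(b';v,m)=0$ and the left-hand side is $0\le b'-b$, which holds because $b\le b'$. If instead $b'\ge m$, then $r(b';v,m)=v-b'$ and the claim becomes $b'-v\le b'-b$, i.e.\ $b\le v$, which is exactly part of the hypothesis.

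The only subtlety — and the reason I would spell out the final sub-case rather than simply invoke $r(b';v,m)\ge0$ — is that the hypothesis does \emph{not} bound $b'$ from above by $v$: the higher bid $b'$ may exceed the valuation, making $r(b';v,m)=v-b'$ negative. The computation above handles this uniformly, since it never uses the sign of $v-b'$, only the inequality $b\le v$. Assembling the three cases yields $r(b;v,m)-r(b';v,m)\le b'-b$ in every situation, completing the proof. There is no genuine obstacle here; the content is entirely in organizing the cases so that the discontinuity of $\ind(\cdot\ge m)$ is crossed only in the direction ($b'\to b$ dropping below $m$) that can only \emph{increase} the gap, never create a violation.
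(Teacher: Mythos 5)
Your proof is correct and complete: the three-way case analysis ($b\ge m$; $b<m$ with $b'<m$; $b<m$ with $b'\ge m$) covers all possibilities, and each case uses exactly the right hypothesis ($b\le b'$ for the first two, $b\le v$ for the last). This is essentially the same elementary argument as in the cited source (the paper itself defers the proof to \citet{han2020learning}), including the correct observation that $b'$ need not be below $v$, so one cannot shortcut the last case by claiming $r(b';v,m)\ge 0$.
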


     Lemma \ref{lem:lipsv1} implies that the one-sided Lipschitzness of the reward function relies on the condition $b \leq v$, meaning that the set of constant policies does not satisfy this property. Notably, \citet{han2020learning} encountered a similar difficulty, where they aimed to compete with the best fixed policy within the set of $1$-Lipschitz policies $\cF_{\text{Lip}}$. However, they found that restricting the policy set to $\cF_0 \coloneqq \{f  \mid f \in \cF_{\text{Lip}}, f(v) \leq v\}$ does not compromise the reward and resolves the problem. Inspired by this, we define $\mathcal{F} \coloneqq \{f(v; \tau) \mid \tau \in [0, 1]\}$, where $f(v; \tau) \coloneqq \min\{v, \tau\}$. $\mathcal{F}$ can be viewed as a modified version of $\mathcal{N} \coloneqq \{\tau \mid \tau \in [0, 1]\}$, the set of constant policies, with the additional constraint $f(v; \tau) \leq v$.
 We further define $\mathcal{F}_\epsilon \coloneqq \{f(v; \tau) \mid \tau \in \{0, \epsilon, 2\epsilon, \dots, \epsilon \lfloor 1/\epsilon \rfloor\}\}$, which is a discretized version of $\mathcal{F}$ with precision $\epsilon$. Using this setup, we can effectively eliminate the
dependence on $V_T^v$ through a careful application of the one-sided Lipschitzness property given in Lemma \ref{lem:lipsv1}.

With all the necessary tools in place, we now illustrate how to leverage the concept of "adaptive balancing" to achieve an improved dynamic regret rate. Assuming $V_T$ is known, it is sufficient to restart the Prod forecaster every $\Delta_T$ rounds, where $\Delta_T = O\lt(\sqrt{\frac{T}{V_T}}\rt)$, to achieve a dynamic regret of $\tO\lt(\sqrt{TV_T}\rt)$. However, in practice, $V_T$ is typically unknown. To address this, we use an adaptive restart condition, as demonstrated in Algorithm \ref{alg:full_vt}, to resolve the issue while still achieving the minimax-optimal rate. Theorem \ref{thm:full_vt_ub} establishes the minimax-optimal dynamic regret guarantee under the condition $V_T=o(T)$.  

\begin{algorithm}[H]
\caption{The Adaptive Restart Prod Policy (AR-Prod)}
    \label{alg:full_vt}
    \SetAlgoLined
    \SetKwInOut{Input}{Input}
    \Input{Time horizon $T$}
    \BlankLine
    $j\leftarrow 1,\eta\leftarrow \frac{1}{2},\epsilon\leftarrow \frac{1}{T},c\leftarrow \frac{1}{T}$\; 
    \While{$t\le T$}{
        Observe the ad impression at $t$ and generate the value $v_t$\;
        Create $\cT_j$\;
        $++j$\;
        $p_t \leftarrow  \left(\frac{1}{N},\dots,\frac{1}{N}\right)$ where $N\leftarrow \frac{1}{\eps}$\;
        \While{$\Delta_{T,j}<\sqrt{\frac{T}{\sum_{i= 1}^jV_{T,i}+c}}$}{
        Choose $b_t\leftarrow \min\{v_t,i\eps\}$ with probability $p_{t,i}$\;
        Submit $b_t$ and receive $m_t$\;
        Update $\Delta_{T,j}$ and $V_{T,j}$\tcp*[r]{$\Delta_{T,j}$: length of $\mathcal{T}_j$, $V_{T,j}$: temporal variation of $(m_t)_{t\in\cT_j}$}
        $\mu_t\leftarrow\max\{v_t-m_t,0\}$\;
            $p_{t+1,i} \leftarrow  \frac{(1 + \eta (r_{t,i} - \mu_t)) p_{t,i}}{\sum_{j=1}^N (1 + \eta (r_{t,j} - \mu_t)) p_{t,j}}$\;
            ++t\;
            Observe the ad impression at $t$ and generate the value $v_t$\;
        }
    }
    \end{algorithm}


\begin{theorem}\label{thm:full_vt_ub}
    Assume $V_T=o(T)$. When $V_T$ is known, we can restart the Prod forecaster (Equation \eqref{eq:prod_forecaster}) with $\mu_t=\max\{v_t-m_t,0\}$ using a constant batch size $\Delta_T=O\left(\sqrt{\frac{T}{V_T}}\right)$ to achieve the $\tO\left(\sqrt{TV_T}\right)$ dynamic regret. When $V_T$ is unknown, Algorithm \ref{alg:full_vt} restarts the Prod policy adaptively and achieves
    \[
    \sup_{(v_t,m_t)_{t=1}^T\in\cV}\E\lt[\textnormal{DR}_T(\pi)\rt]=\tO\lt(\max\lt\{\sqrt{TV_T},1\rt\}\rt).
    \]
\end{theorem}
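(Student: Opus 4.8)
The plan is to instantiate the batch decomposition of Equation~\eqref{eq:decomp1} with comparator class $\tF=\cF_\eps$ and with the Prod update (Option~\rtwo of Algorithm~\ref{alg:omd}, i.e.\ Equation~\eqref{eq:prod_forecaster}) as the per-batch minimizer $\cA$, using optimism $\mu_t=\max\{v_t-m_t,0\}$. This reduces everything to controlling, for each batch $\cT_j$, the static regret $\cS^{\cA}(\cF_\eps,\cT_j)$ and the transition cost $\cC(\cF_\eps,\cT_j)$, and then summing. For the static regret I would invoke the standard second-order (variance) guarantee of the Prod update, $\cS^{\cA}(\cF_\eps,\cT_j)=O(\sqrt{\ln N\cdot\sum_{t\in\cT_j}(r_{t,i^*}-\mu_t)^2})$, where $i^*$ is the best fixed expert on $\cT_j$. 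The decisive point is that $\mu_t$ was engineered to equal $r(b_t^*;v_t,m_t)$, the per-round dynamic optimum; hence $0\le\mu_t-r_{t,i^*}\le 1$, so $(r_{t,i^*}-\mu_t)^2\le\mu_t-r_{t,i^*}$, and summing telescopes into exactly the transition cost: $\sum_{t\in\cT_j}(r_{t,i^*}-\mu_t)^2\le\sum_{t\in\cT_j}\mu_t-\sum_{t\in\cT_j}r_{t,i^*}=\cC(\cF_\eps,\cT_j)$. Thus the static regret is self-bounded, $\cS^{\cA}(\cF_\eps,\cT_j)=O(\sqrt{\cC(\cF_\eps,\cT_j)\ln N})$ — the favorable static-vs-transition balance the optimism is designed to create.

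Next I would bound the transition cost by $\cC(\cF_\eps,\cT_j)\le\Delta_{T,j}V_{T,j}+\eps\Delta_{T,j}$. Taking the continuous threshold $\tau=\max_{s\in\cT_j}m_s$, the policy $f(\cdot;\tau)=\min\{\cdot,\tau\}$ wins whenever $v_t\ge m_t$, with per-round shortfall against $b_t^*$ equal to $\min\{v_t,\tau\}-m_t\le\tau-m_t\le V_{T,j}$ (the range of $(m_s)_{s\in\cT_j}$ is at most its variation), and zero shortfall when $v_t<m_t$ since then $f$ also loses. Rounding $\tau$ up to the nearest grid point of $\cF_\eps$ raises the bid by at most $\eps$ and, by the one-sided Lipschitz Lemma~\ref{lem:lipsv1}, costs at most $\eps$ per round. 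Summing over $\cT_j$ and using $\eps=1/T$ gives the stated bound.

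For known $V_T$ with a constant batch size $\Delta_T$, combining the two per-batch bounds gives $\E[\text{DR}_T(\pi)]\le\sum_j O(\sqrt{\cC_j\ln N})+\sum_j\cC_j$ with $\cC_j\coloneqq\cC(\cF_\eps,\cT_j)$. Here $\sum_j\cC_j\le\Delta_T\sum_jV_{T,j}+\eps T\le\Delta_TV_T+1$ (using $\sum_jV_{T,j}\le V_T$), and Cauchy–Schwarz over the $n=\lceil T/\Delta_T\rceil$ batches gives $\sum_j\sqrt{\cC_j}\le\sqrt{n\sum_j\cC_j}$. Choosing $\Delta_T=\sqrt{T/V_T}$ equalizes $T/\Delta_T$ and $\Delta_TV_T$ and yields the claimed $\tO(\sqrt{TV_T})$.

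The unknown-$V_T$ case is where the real work lies. Algorithm~\ref{alg:full_vt} closes batch $j$ as soon as $\Delta_{T,j}\ge\sqrt{T/(S_j+c)}$ with $S_j\coloneqq\sum_{i\le j}V_{T,i}$ and $c=1/T$. This rule pins $\Delta_{T,j}$ near $\sqrt{T/(S_j+c)}$, so the transition-cost sum becomes a Riemann sum $\sum_j\Delta_{T,j}V_{T,j}\approx\sum_j\sqrt{T/(S_{j-1}+c)}\,(S_j-S_{j-1})$, which I would dominate by the integral $\int_0^{V_T}\sqrt{T/(s+c)}\,ds=O(\sqrt{TV_T})$. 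Because $\Delta_{T,j}\ge\sqrt{T/(V_T+c)}$ for every $j$ while the batches partition $[T]$, the batch count obeys $n\le\sqrt{T(V_T+c)}$; feeding this into Cauchy–Schwarz yields $\sum_j\sqrt{\cC_j}\le\sqrt{n\sum_j\cC_j}=\tO(\sqrt{TV_T})$, and the additive $O(1)$ coming from $c=\eps=1/T$ produces the $\max\{\sqrt{TV_T},1\}$ form. The main obstacle is making this adaptive accounting rigorous: the batch length is defined implicitly by a rule in which $V_{T,j}$ itself grows as the batch lengthens, so one must carefully relate $\Delta_{T,j}$ to the cumulative variation just before versus just after the final round (a large jump in $m_t$ on that round must be absorbed) before the Riemann-sum-to-integral comparison and the simultaneous control of $\sum_j\Delta_{T,j}V_{T,j}$ and the batch count $n$ can be justified.
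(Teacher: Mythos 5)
Your proposal follows essentially the same route as the paper: the batch decomposition of Equation~\eqref{eq:decomp1}, Prod (Option \rtwo) with optimism $\mu_t=\max\{v_t-m_t,0\}$, the self-bounding observation that $\mu_t\ge r_{t,i^*}$ pointwise implies $\sum_{t\in\cT_j}(r_{t,i^*}-\mu_t)^2\le \cC(\cF_\eps,\cT_j)$ — this is exactly the paper's relation \eqref{eq:relate} (note that with the fixed $\eta=\frac12$ used by Algorithm~\ref{alg:full_vt}, Prod yields the additive form $\cS_j\le O(\ln N)+O(\cC_j)$ rather than your tuned $O(\sqrt{\cC_j\ln N})$; either form suffices) — the transition-cost bound via the threshold policy $\min\{v,\tau\}$ and Lemma~\ref{lem:lipsv1}, and the known-$V_T$ balancing with $\Delta_T=\sqrt{T/V_T}$. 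All of this is sound.

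The genuine gap is in the unknown-$V_T$ accounting, precisely where you yourself flag "the main obstacle." The intermediate claim you write — that $\sum_j\Delta_{T,j}V_{T,j}$ is approximately a Riemann sum $\sum_j\sqrt{T/(S_{j-1}+c)}\,(S_j-S_{j-1})$ dominated by $\int_0^{V_T}\sqrt{T/(s+c)}\,ds=O(\sqrt{TV_T})$ — is false, for two reasons. First, a left-endpoint sum of a decreasing integrand \emph{over}-estimates the integral. Second, and fatally, the quantity $\sum_j\Delta_{T,j}V_{T,j}$ itself can be $\Omega(T)$ while $V_T=O(1)$: take $m_t\equiv 0$ for $t<T/2$ (batch $1$ never closes, since its threshold is $\sqrt{T/c}=T$), then a unit jump at $t=T/2$ closes it with $\Delta_{T,1}=T/2$ and $V_{T,1}=1$. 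So no correct argument can establish that bound; the dynamic regret survives in this example only because the \emph{actual} transition cost there is $O(1)$, far below $\Delta_{T,1}V_{T,1}$. The repair is to refine the per-batch transition-cost lemma: choose the comparator threshold $\tau$ as the maximum of $m_s$ over all rounds of $\cT_j$ \emph{except the closing round}, giving $\cC(\cF_\eps,\cT_j)\le \Delta_{T,j}\tilde{V}_{T,j}+1+\eps\Delta_{T,j}$, where $\tilde{V}_{T,j}$ excludes the final increment and the closing round's shortfall is bounded crudely by $1$. The restart rule guarantees $\Delta_{T,j}-1<\sqrt{T/(S_{j-1}+\tilde{V}_{T,j}+c)}$, whose denominator \emph{includes} the current increment, so the right-endpoint self-confident lemma $\sum_j a_j/\sqrt{\sum_{i\le j}a_i}\le 2\sqrt{\sum_j a_j}$ applies and yields $\sum_j\Delta_{T,j}\tilde{V}_{T,j}\le 2\sqrt{TV_T}+V_T$, while the $n$ leftover $O(1)$ terms cost $n\le\sqrt{T(V_T+c)}+1=O(\sqrt{TV_T}+1)$. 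This is the "self-confident tuning" step the paper's proof sketch invokes, and it is the piece your proposal leaves unresolved.
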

\begin{proof}{Proof Sketch.}
    We begin by considering the case where $V_T$ is known. The proof follows the approach suggested in Equation \eqref{eq:dynamic_regret_improved}. We define $\mathcal{F} \coloneqq \{f(v;\tau) \mid \tau \in [0, 1]\}$ and $\mathcal{F}_\eps \coloneqq \lt\{f(v;\tau) \mid \tau =k\eps,k\in\lt[\floor*{\frac{1}{\eps}}\rt]\rt\}$ as the set of policies and its discretization, respectively, where $f(v;\tau)\coloneqq\min\{v,\tau\}$. We first consider the case where $V_T$ is known. We divide the time horizon $T$ into batches $\cT_1,\cT_2,\dots,\cT_n$ of equal length (possibly except $\cT_n$) and consider the dynamic regret
    \begin{equation}\label{eq:dynamic_minimax}
        \begin{split}
        \E[\text{DR}_{T}(\pi)] &=\sum_{j=1}^n\cS^{\cA}(\cF,\cT_j)+\sum_{j=1}^n\mathcal{C}(\cF,\cT_j)\\
        &\le \sum_{j=1}^{\lceil{T/\Delta_T}\rceil} \tilde{O}\left(\Delta_T V_{T,j} + 1\right) + \sum_{j=1}^{\lceil{T/\Delta_T}\rceil}\Delta_T V_{T,j} \\
        &= \tilde{O}\left(\Delta_T V_T+\frac{T}{\Delta_T}\right)+ \Delta_T V_T= \tilde{O}\left(\Delta_T V_T+\frac{T}{\Delta_T}\right) = \tilde{O}\left(\sqrt{T V_T}\right),
        \end{split}
        \end{equation}
        where $\cA$ is the Prod forecaster illustrated in Equation \eqref{eq:prod_forecaster}. The inequality is shown by the following idea: by choosing the translation term in the Prod forecaster as $\mu_t\coloneqq \max\{v_t-m_t,0\}$, we can show 
        \begin{equation}\label{eq:relate}
            S^{\cA}(\cF,\cT_j)=O(\mathcal{C}(\cF,\cT_j))+\tO(1)
        \end{equation}
        holds for any batch $j$. It then suffices to show $\mathcal{C}(\cF,\cT_j)\le\Delta_T V_{T,j}$ to establish that the inequality in Equation \eqref{eq:relate} holds, which is possible since our expert set $\cF$ facilitates the application of Lemma \ref{lem:lipsv1}.

        

    Now suppose $V_T$ is unknown, then we use the adaptive restart routine in Algorithm \ref{alg:full_vt}. Following the argument in Equation \eqref{eq:dynamic_minimax}, we can establish
    \begin{equation}\label{eq:decomp_sketch4}
        \E[\text{DR}_T(\pi)]=\tO\lt(n+\sum_{j=1}^n\Delta_{T,j}V_{T,j}\rt),
    \end{equation}
    where $n$ denotes the number of batches, $\Delta_{T,j}$ represents the length of batch $j$, and $V_{T,j}$ denotes the temporal variation of $m_t$ within batch $j$. While these quantities ($n$, $\Delta_{T,j}$, and $V_{T,j}$) are a priori unknown, leveraging the restart condition in conjunction with the self-confident tuning technique (cf. \cite{auer2002adaptive}) allows us to effectively bound them. Specifically, these techniques yield $n = \tilde{O}(\sqrt{TV_T})$ and $\sum_{j=1}^n \Delta_{T,j} V_{T,j} = \tilde{O}(\sqrt{TV_T})$, where $T$ is the total time horizon and $V_T$ denotes the total temporal variation across all batches. Consequently, substituting these bounds into Equation \eqref{eq:decomp_sketch4} yields the desired $\tilde{O}(\sqrt{TV_T})$ bound. 
\end{proof}
\begin{remark}
    Previous proofs for learning in non-stationary environments \citep{besbes2015non,besbes2019optimal,cheung2022hedging,cheung2023nonstationary} typically decompose the dynamic regret into the sum of static regret and transition cost, and then bound these terms \emph{individually}. While this approach could also be applied to our problem, in the proof of Theorem \ref{thm:full_vt_ub}, we instead establish a direct relationship between the static regret and the transition cost (Equation \eqref{eq:relate}). This alternative approach results in a more transparent proof and may be of independent interest.
\end{remark}

\subsection{Dynamic Regret Rates under the Switching Number Constraint}
\label{sec:full_lt}
We now consider the case where the number of switches in the opponents' highest bid sequence, $L_T = \sum_{t=2}^T \ind(m_t \neq m_{t-1})$, is $o(T)$. Our approach combines the Optimistic Mirror Descent (OMD) framework (Algorithm \ref{alg:omd}, Option \rone) with an adaptive restart scheme: OMD with a suitable optimism vector $o_t$ is run within each batch, and a new batch is started whenever a change in $m_t$ is detected (i.e., $m_t \neq m_{t-1}$).

    Since $m_t$ is observed exactly, each batch contains at most one switch ($m_t\ne m_{t-1}$). Due to the configured optimism, the static regret for each batch corresponds to the number of switches. Given this single-switch property, we can show the transition cost from static regret to dynamic regret is $\tO(1)$. Combining both parts, and summing over all $L_T$ batches, the total dynamic regret is $\tO(L_T)$. We use the negative entropy regularizer in OMD, which allows for efficient closed-form updates as shown in Algorithm \ref{alg:full_lt}. Theorem \ref{thm:full_lt1} formalizes this result, establishing a dynamic regret upper bound. 

 \begin{algorithm}[H]
\caption{Adaptive Restart Optimistic Mirror Descent (AR-OMD)}
    \label{alg:full_lt}
    \SetAlgoLined
    \SetKwInOut{Input}{Input}
    \Input{$\cP$ is the convex hull of $\{e_1,\dots,e_N\}$; $\psi(p)\leftarrow \frac{1}{\eta}\sum_{i=1}^Np_i\ln p_i$}
    $j\leftarrow 1$,$t\leftarrow 1$\;
    \BlankLine
    \While{$t\le T$}{
        Create $\cT_j$\;
        ++ $j$\;
        Update
        \[
        p_{t,i}\propto\exp\lt(\eta\lt(\sum_{s=1}^{t-1}r_{s,i}+o_{t,i}\rt)\rt)
        \]
        where $o_{t,i}\coloneqq r(f(v_t;i\epsilon);v_t,m_{t-1})$\;
        Submit bids according to $p_t$, and receive $m_t$\;
        \While{$t$ is the first round in $\cT_j$ or $m_t=m_{t-1}$}{
            $++t$\;
            Update
        \[
        p_{t,i}\propto\exp\lt(\eta\lt(\sum_{s=1}^{t-1}r_{s,i}+o_{t,i}\rt)\rt)
        \]
        where $o_{t,i}\coloneqq r(f(v_t;i\epsilon);v_t,m_{t-1})$\;
        }
    }
    \BlankLine
    \end{algorithm}

\begin{theorem}\label{thm:full_lt1}
    Assume $L_T=o(T)$ and is unknown, then Algorithm \ref{alg:full_lt} achieves
    \[
    \sup_{(v_t,m_t)_{t=1}^T\in\cL}\E\lt[\textnormal{DR}_T(\pi)\rt]=\tO\lt(L_T\rt).
    \]
\end{theorem}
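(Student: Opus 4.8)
The plan is to apply the restart decomposition of Equation \eqref{eq:decomp1} with the implementable class $\cF_\epsilon$ (the $\epsilon$-discretization of $\cF=\{f(\cdot;\tau):\tau\in[0,1]\}$, where $f(v;\tau)=\min\{v,\tau\}$), and to show that, per batch, both the transition cost $\cC(\cF_\epsilon,\cT_j)$ and the static regret $\cS^{\cA}(\cF_\epsilon,\cT_j)$ are $\tO(1)$, while the number of batches satisfies $n\le L_T+1$. The first step is the structural property of the restart rule: since a new batch opens precisely when a switch $m_t\neq m_{t-1}$ is detected, on each batch $\cT_j$ the sequence $(m_t)_{t\in\cT_j}$ is constant after its first round—call this value $\bar m_j$—so each batch carries at most one switch and hence $n\le L_T+1$.

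For the transition cost I would first observe that, were we allowed the continuous class $\cF$, the cost would vanish exactly: with $m_t\equiv\bar m_j$ on $\cT_j$, the single fixed expert $f(\cdot;\bar m_j)$ satisfies $r(f(v_t;\bar m_j);v_t,\bar m_j)=\max\{v_t-\bar m_j,0\}=r(b_t^*;v_t,m_t)$ for every $t\in\cT_j$ (checking the two cases $v_t\ge\bar m_j$ and $v_t<\bar m_j$), so it attains the per-round dynamic optimum throughout the batch. To pass to the implementable $\cF_\epsilon$ I would round up to $\tau'=\epsilon\lceil\bar m_j/\epsilon\rceil\in\cF_\epsilon$ and apply Lemma \ref{lem:lipsv1} with $b=\min\{v_t,\bar m_j\}\le\min\{v_t,\tau'\}=b'$, giving a per-round loss of at most $\tau'-\bar m_j\le\epsilon$; summing yields $\cC(\cF_\epsilon,\cT_j)\le\epsilon\Delta_{T,j}$, and summing over batches telescopes to $\epsilon T=1$ under the choice $\epsilon=1/T$.

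For the static regret the key observation is that the configured optimism $o_{t,i}=r(f(v_t;i\epsilon);v_t,m_{t-1})$ is exact on every round with $m_t=m_{t-1}$, i.e.\ $o_{t,i}=r_{t,i}$ for all experts on all but the first round of each batch; since rewards lie in $[0,1]$ this forces $\sum_{t\in\cT_j}\|r_t-o_t\|_\infty^2\le 1$. Recognizing Algorithm \ref{alg:full_lt}'s exponential-weights update (restarted on $\cT_j$) as an optimistic-Hedge instance, I would invoke the RVU-type bound $\cS^{\cA}(\cF_\epsilon,\cT_j)\le \frac{\ln N}{\eta}+\eta\sum_{t\in\cT_j}\|r_t-o_t\|_\infty^2\le\frac{\ln N}{\eta}+\eta$; because the variation term is a constant, a fixed constant learning rate $\eta$ (independent of the unknown $L_T$) already gives $\cS^{\cA}(\cF_\epsilon,\cT_j)=O(\ln N)=\tO(1)$, using $N=\lceil 1/\epsilon\rceil$.

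Combining the three ingredients through Equation \eqref{eq:decomp1} gives $\E[\text{DR}_T(\pi)]=\sum_{j=1}^n\cS^{\cA}(\cF_\epsilon,\cT_j)+\sum_{j=1}^n\cC(\cF_\epsilon,\cT_j)\le n\cdot\tO(1)+\epsilon T=\tO(L_T)$ via $n\le L_T+1$. I expect the main obstacle to be the static-regret step: one must verify that the stated exponential-weights recursion is exactly optimistic Hedge on each restarted batch and that the corresponding RVU bound holds with this particular optimism, then confirm that the lone per-batch mismatch round (the switch) contributes at most a constant and that the logarithmic regularizer term stays $\tO(1)$ uniformly, with no tuning depending on $L_T$. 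The remaining care is bookkeeping—passing from $\langle p_t,r_t\rangle$ to $\E[r(b_t;v_t,m_t)]$ under the oblivious sequence, and checking that the off-by-one convention at batch boundaries never introduces more than one mismatch round per batch.
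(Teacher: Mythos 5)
Your proposal follows essentially the same route as the paper's proof: the adaptive restart rule guarantees at most one switch per batch, hence $n\le L_T+1$ batches; the optimism $o_{t,i}=r(f(v_t;i\epsilon);v_t,m_{t-1})$ coincides with $r_{t,i}$ on every non-switch round, so optimistic Hedge with a constant learning rate gives $O(\ln N)=\tO(1)$ static regret per batch; the discretized class $\cF_\epsilon$ with $\epsilon=1/T$ keeps the transition cost negligible; and summing over batches via the decomposition in Equation \eqref{eq:decomp1} yields $\tO(L_T)$. This is exactly the paper's "single switch per batch + optimism matches reward + per-batch $\tO(1)$ costs" argument.

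Two bookkeeping corrections, neither of which threatens the conclusion. First, your structural claim is inverted: since a batch is terminated \emph{when} a switch $m_t\neq m_{t-1}$ is detected, the switch sits at the last round of each batch, not the first; moreover, because the first-round check is exempted, the first round of a batch can also be a mismatch round (its optimism uses the previous batch's final $m$ value). So the per-batch prediction error satisfies $\sum_{t\in\cT_j}\lVert r_t-o_t\rVert_\infty^2\le 2$ rather than $1$, which changes nothing. Second, your claim that $f(\cdot;\bar m_j)$ "attains the per-round dynamic optimum throughout the batch" fails on the switch round: if there $m_t>\bar m_j$ and $v_t>m_t$, that expert bids $\min\{v_t,\bar m_j\}<m_t$, loses the auction, and forfeits $v_t-m_t$. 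The correct per-batch bound is therefore $\cC(\cF_\epsilon,\cT_j)\le\epsilon\Delta_{T,j}+O(1)$, and summing gives $\epsilon T+O(n)=O(L_T)$, so your final bound stands as stated.
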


\subsection{Best-of-Both-Worlds Dynamic Regret}
\label{sec:bobw}
In Sections \ref{sec:full_vt} and \ref{sec:full_lt}, 
we have established the $\tilde{O}(\sqrt{V_TT})$ and $\tilde{O}(L_T)$ dynamic regret rates for slowly varying and abruptly changing bidding environments, respectively. But in reality, it is hard for a learner to know a priori which non-stationary measure is suitable, thus it is desirable to automatically achieve the better of the two guarantees whenever one outperforms the other. This problem is termed as the \emph{best-of-both-worlds} bound in the online learning literature. An important technique for establishing the best-of-both-worlds bound is to run a few base algorithms in parallel, and use a meta algorithm to aggregate the output of base algorithms to ensure the resulting long-term performance is always as good as the best base algorithm. In this part, we establish the best-of-both-worlds bound based on the meta algorithm by \citet{sani2014exploiting}. The resulting algorithm and theoretical guarantee are presented as Algorithm \ref{alg:best} and Theorem \ref{thm:bobw}, respectively, and the proof of Theorem \ref{thm:bobw} is deferred to Appendix \ref{sec:bobw}. 

  \begin{theorem}\label{thm:bobw}
        Assume $V_T=\Omega(\ln T)$, then Algorithm \ref{alg:best} achieves $\tO(\min\{\sqrt{TV_T},L_T\})$ best-of-both-worlds dynamic regret guarantee for online first-price auctions.
    \end{theorem}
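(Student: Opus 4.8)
The plan is to realize Algorithm \ref{alg:best} as the meta-aggregator of \citet{sani2014exploiting} run over two base learners executed in parallel: AR-Prod (Algorithm \ref{alg:full_vt}), whose dynamic regret is $\tO(\sqrt{TV_T})$ by Theorem \ref{thm:full_vt_ub}, and AR-OMD (Algorithm \ref{alg:full_lt}), whose dynamic regret is $\tO(L_T)$ by Theorem \ref{thm:full_lt1}. The feedback is full-information: once $m_t$ is revealed, the learner can evaluate $r(b;v_t,m_t)$ for every bid $b$, and in particular can compute both the reward $r_{t,k}$ of each base $k\in\{1,2\}$ and the per-round optimal reward $r(b_t^*;v_t,m_t)=\max\{v_t-m_t,0\}$ from Equation \eqref{eq:dynamic}. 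The crucial modeling choice is to feed the meta-aggregator not the raw rewards but the \emph{instantaneous regret-losses}
\[
\tilde{\ell}_{t,k}\coloneqq r(b_t^*;v_t,m_t)-r_{t,k}\in[0,1],\qquad k\in\{1,2\},
\]
which are nonnegative, bounded, and computable after observing $m_t$, hence admissible in the online protocol.

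With this choice the argument telescopes cleanly against the dynamic benchmark. If the aggregator maintains a distribution $p_t$ over the two bases and submits the bid of the sampled base, then by Equation \eqref{eq:dynamic_reg} its expected cumulative regret-loss equals its dynamic regret, $\sum_{t=1}^T\lt(r(b_t^*;v_t,m_t)-\E[r(b_t;v_t,m_t)]\rt)=\E[\text{DR}_T(\pi)]$, while the cumulative regret-loss of base $k$ equals $\E[\text{DR}_T(k)]$. Applying the first-order (small-loss) guarantee of \citet{sani2014exploiting} for $N=2$ experts to the loss sequence $(\tilde{\ell}_t)_{t=1}^T$ then yields
\[
\E[\text{DR}_T(\pi)]\le R^*+\tO\lt(\sqrt{R^*}+1\rt),\qquad R^*\coloneqq\min\lt\{\E[\text{DR}_T(1)],\,\E[\text{DR}_T(2)]\rt\},
\]
where the overhead scales with the \emph{best} base's regret $R^*$ rather than with the worst-case $\sqrt{T}$, precisely because the regret-losses are small whenever the best base performs well.

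It then remains to absorb the overhead. By Theorems \ref{thm:full_vt_ub} and \ref{thm:full_lt1} we have $R^*\le\tO(\min\{\sqrt{TV_T},L_T\})$. The hypothesis $V_T=\Omega(\ln T)$, together with the elementary inequality $V_T\le L_T$ (each switch contributes at most $|m_t-m_{t-1}|\le1$ to $V_T$), forces $L_T\ge V_T=\Omega(\ln T)$ and $\sqrt{TV_T}=\Omega(\sqrt{T\ln T})$, so that $\min\{\sqrt{TV_T},L_T\}=\Omega(\ln T)$ is large enough that both $\sqrt{R^*}$ and the additive $\tO(1)$ term are dominated. Hence $\E[\text{DR}_T(\pi)]=\tO(\max\{R^*,1\})=\tO(\min\{\sqrt{TV_T},L_T\})$, as claimed.

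The main obstacle is the second step: obtaining a meta overhead of order $\sqrt{R^*}$ rather than the generic two-expert overhead $\tO(\sqrt{T})$ that a vanilla Hedge-style aggregator would incur—an overhead that would swamp the $\tO(L_T)$ guarantee whenever $L_T$ is small. Overcoming this requires (i) running the aggregator on the regret-losses $\tilde{\ell}_{t,k}$, so that the cumulative-loss quantity appearing in the small-loss bound is exactly $R^*$ (note that shifting every expert's loss by the common per-round constant $r(b_t^*;v_t,m_t)$ leaves the meta-versus-best-base regret unchanged but shrinks the magnitude of the losses that the first-order bound depends on), and (ii) invoking the data-dependent guarantee of \citet{sani2014exploiting}, whose overhead against the best base is governed by that base's cumulative loss. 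The remaining work is the careful verification that this first-order bound holds for our bounded, computable loss sequence and that $V_T=\Omega(\ln T)$ indeed suffices to absorb the lower-order terms.
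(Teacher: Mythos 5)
There is a genuine gap at precisely the step you flag as the crux. The bound you attribute to \citet{sani2014exploiting} --- a first-order guarantee $\E[\textnormal{DR}_T(\pi)]\le R^*+\tO\lt(\sqrt{R^*}+1\rt)$ against the \emph{better} of the two bases --- does not appear in that paper and is false for Algorithm \ref{alg:best}. The $(\cA,\cB)$-Prod aggregator is fundamentally \emph{asymmetric}: only $w^{\cA}$ is updated while $w^{\cB}\equiv 1-\eta$, and the Prod analysis gives meta reward at least $\cB$'s reward minus $O(1)$, but at least $\cA$'s reward only minus $\frac{1}{\eta}\ln(1/w_1^{\cA})+\eta\sum_t\delta_t^2$, which with the fixed $\eta=\frac{1}{2}\sqrt{\ln T/T}$ and $w_1^{\cA}=\eta$ is $\Theta(\sqrt{T\ln T})$ \emph{independently of the data}. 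No encoding of the losses makes this term scale with $R^*$. Concretely, as a statement about the aggregator applied to bounded regret-loss streams (which is how you invoke it), it fails: if $\cA$'s regret-loss is $0$ every round while $\cB$'s is $1$ every round, then $R^*=0$, yet $w_t^{\cA}\le\eta(1+\eta)^{t-1}$ implies $p_t<\frac{1}{2}$ for the first $\Omega\lt(\sqrt{T\ln T}\rt)$ rounds, so the meta suffers $\Omega\lt(\sqrt{T\ln T}\rt)$ regret-loss, contradicting your claimed $\tO(1)$. Relatedly, your ``crucial modeling choice'' of feeding regret-losses rather than rewards changes nothing about Algorithm \ref{alg:best}: the update uses only the difference $\delta_t$, in which the common shift $r(b_t^*;v_t,m_t)$ cancels, and loss magnitudes could only enter through a learning rate tuned to the observed losses, which this fixed-$\eta$ algorithm does not have.

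The correct argument applies the asymmetric guarantee directly: the $O(1)$-overhead side protects the fragile bound, giving meta regret at most $\tO(L_T)+O(1)=\tO(L_T)$ by Theorem \ref{thm:full_lt1}, while the $O(\sqrt{T\ln T})$-overhead side is charged to AR-Prod, giving meta regret at most $\tO(\sqrt{TV_T})+O(\sqrt{T\ln T})=\tO(\sqrt{TV_T})$ by Theorem \ref{thm:full_vt_ub}, where the last equality is exactly where the hypothesis $V_T=\Omega(\ln T)$ is needed. In your write-up that hypothesis does essentially no work (under your claimed intermediate bound the theorem would hold without it), which is itself a symptom of the misattribution; the remark following the theorem in the paper states that absorbing the $O(\sqrt{T\ln T})$ overhead is its entire purpose. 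Your underlying idea is salvageable in a different form: running a genuinely self-tuned small-loss expert algorithm (e.g., self-confident Hedge/Prod or AdaHedge) on the regret-losses $\tilde{\ell}_{t,k}$ would yield $R^*+O(\sqrt{R^*\ln 2}+\ln 2)$ and hence the best-of-both-worlds rate, even without $V_T=\Omega(\ln T)$. But that is a different meta-algorithm with an adaptive learning rate, so it would not prove the stated theorem, which is about Algorithm \ref{alg:best}.
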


\begin{algorithm}[H]
    \caption{Non-stationary First-price Auction with Best-of-Both-Worlds Guarantee.}
    \SetAlgoLined
    \SetKwInOut{Input}{Input}
    \Input{Let $\cA$ and $\cB$ be Algorithms  
    \ref{alg:full_vt} and \ref{alg:full_lt}, respectively; total number of rounds $T$, learning rate $\eta=\frac{1}{2}\cdot\sqrt{\frac{\ln T}{T}}$, initial weights $w_1^{\cA}=\eta,w_1^{\cB}=1-\eta$.}
    $j\leftarrow 1$,$t\leftarrow 1$\;
    \BlankLine
    \For{$t \leftarrow 1$ \KwTo $T$}{
        $p_t=\frac{w_t^{\cA}}{w_t^{\cA}+w_t^{\cB}}$\;
        Observe bids $b_t^{\cA}$ and $b_t^{\cB}$ produced by $\cA$ and $\cB$\;
        {%
            Bid
            \begin{flushleft}
        \[
            b_t=\begin{cases}
                b_t^{\cA},&\text{with probability } p_t,\\
                b_t^{\cB},&\text{otherwise},
            \end{cases}
        \]
        \end{flushleft}}%
        Observe $m_t$ and get reward $r(b_t;v_t,m_t)$\;
        Send $m_t$ to $\cA$ and $\cB$\;
        Let $\delta_t=r_t(b_t^{\cA};v_t,m_t)-r_t(b_t^{\cB};v_t,m_t)$\;
        Set $w_{t+1}^{\cA}=w_t^{\cA}(1+\eta\delta_t)$\;
}
    \BlankLine
    \label{alg:best}
    \end{algorithm}
  
    \begin{remark}
        If we want to obtain a best-of-both-worlds static regret bound for two algorithms with different adaptive static regret guarantees, the meta algorithm by \citet{sani2014exploiting} might not be applicable since the overhead can be as large as $O(\sqrt{T\ln T})$ for one adaptive regret guarantee, while it is $O(1)$ for the other adaptive regret guarantee. The $O(\sqrt{T\ln T})$ overhead can destroy an adaptive regret guarantee. Fortunately, the dynamic regret guarantee in Theorem \ref{thm:full_vt_ub} is $\tO(\sqrt{TV_T})$, which can easily absorb the $O(\sqrt{T\ln T})$ overhead as long as $V_T=\Omega(\ln T)$.
    \end{remark}

\section{Dynamic Regret Lower Bounds}
\label{sec:full_lb}
In this section, we demonstrate how to establish minimax lower bounds for the class of auction instances where the opponents' highest bid sequence is constrained by either $V_T$ or $L_T$. Our main effort focuses on the case of $V_T$. Following \citet{besbes2015non}, we term $V_T$ the \emph{variation budget}. For the corresponding lower bound construction, we partition the time horizon $T$ into batches of equal size $H$, and we allocate a small and fixed amount of variation budget $\frac{1}{H}$ to each batch to create a jump at locations drawn from the uniform distribution. Within each batch, due to the one-sided Lipschitzness of the reward function, the learner faces a dilemma: providing a small bid incurs $0$ dynamic regret before the jump occurs, but will incur $\Omega(1)$ dynamic regret at the jump point. Bidding a higher price avoids the $\Omega(1)$ dynamic regret at the jump point, but incurs $\frac{1}{H}$ dynamic regret for each round until the jump point. Formally, we show that any non-anticipatory policy incurs $\Omega(1)$ dynamic regret within each batch based on dynamic programming in Lemma \ref{lem:lb}. Since there are $\Theta(T/H)$ batches, choosing $H=\Theta(\sqrt{T/V_T})$ satisfies the variation budget constraint and also implies the $\Omega(\sqrt{TV_T})$ dynamic regret lower bound. For the case of $L_T$, we achieve the $\Omega(L_T)$ lower bound by reducing to the case of $V_T$, which is possible because each batch considered in Lemma \ref{lem:lb} contains only one random jump.



\subsection{Minimax Lower Bound under the Temporal Variation Constraint.}
\label{sec:full_vt_lb}

In this section, we establish an $\Omega(\sqrt{TV_T})$ lower bound for online first-price auctions. We begin by outlining the technical challenges. The adversary's objective is to optimally allocate the variation budget across the entire time horizon. Existing lower bounds for learning in non-stationary environments \citep{besbes2015non,besbes2019optimal,cheung2022hedging} typically rely on the presence of noisy feedback. This noise allows the construction of two reward functions and the partitioning of the time horizon into batches of size $\Delta_T$. Within each batch, a reward function is selected uniformly at random and applied consistently. The noisy feedback ensures that, within each batch, the learner perceives i.i.d. rewards. Consequently, information-theoretic tools can be employed to lower bound the probability of identifying the true underlying reward function. However, in our setting, the learner observes $m_t$ directly, without noise. This absence of noise necessitates the development of alternative approaches.

In this work, for the lower bound construction, we design problem instances that satisfy the variation budget constraint, where the dynamic regret of any non-anticipatory policy can be computed using dynamic programming. Similar ideas have been used to derive minimax lower bounds on the static regret for learning with a small number of experts \citep{cover1966behavior,gravin2016towards,harvey2023optimal}. Specifically, our approach constructs an opponents' highest bid sequence of length $H$ with temporal variation bounded by $1/H$, showing that any admissible policy incurs $\Omega(1)$ dynamic regret on this sequence. By concatenating $\Theta(T/H)$ such sequences with $H = \Theta\lt(\sqrt{T/V_T}\rt)$, we construct a total sequence with temporal variation bounded by $V_T$. The total dynamic regret is then lower bounded by the number of sequences multiplied by $\Omega(1)$, yielding $\Omega(T/H) = \Omega(\sqrt{TV_T})$, as desired.

Lemma \ref{lem:lb} provides the construction of a single sequence and establishes the $\Omega(1)$ lower bound on its dynamic regret. 


\begin{lemma}\label{lem:lb}
    Let $H\ge 2$ be an integer, and consider  a $H$-round online first-price auction game, assume $v_t\equiv 1$, and 
    \[
    m_t=\begin{cases}
        0,&t<\tau,\\
        \delta,&\tau\le t\le H,
    \end{cases}
    \]
    where $\tau$ is uniformly drawn from $\{1,2,\dots,H\}$. Then any non-anticipatory policy suffers at least $\frac{1}{2}-\frac{1}{2H}$ dynamic regret when $\delta=\frac{1}{H}$.
\end{lemma}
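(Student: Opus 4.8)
The plan is to compute the expected dynamic regret exactly as an average over the jump location $\tau$, and then lower-bound it by exploiting the learner's inability, before the jump is actually observed, to distinguish "the jump happens now" from "the jump happens later."

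First I would record the dynamic benchmark. Since $v_t \equiv 1$, the optimal per-round reward is $1$ on the $\tau-1$ rounds with $m_t = 0$ and $1-\delta$ on the $H-\tau+1$ rounds with $m_t = \delta$. The key structural observation is that for any round $t \le \tau$ the learner has seen only the all-zero history $m_1 = \dots = m_{t-1} = 0$ (together with $v_s \equiv 1$), which is identical across every realization with $\tau \ge t$. Hence the bid on the event ``no jump observed yet'' at round $t$ is a single random variable $B_t$ (a measurable function of $t$ and the external randomness $U$) that is reused across all such scenarios; this coupling is what drives the bound. After the jump is observed the learner bids $\delta$ and incurs zero regret, so only the rounds $t \le \tau$ contribute.

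Next I would expand the expected regret, averaging over $\tau$ uniform on $\{1,\dots,H\}$. Conditioned on $\tau = k$, the regret is $\sum_{t<k} B_t$ (each pre-jump round has reward $1-B_t$ against benchmark $1$) plus an at-jump term $g_k(B_k)$, where $g_k(b) = b-\delta$ if $b \ge \delta$ and $g_k(b) = 1-\delta$ if $b < \delta$ (the learner loses the auction entirely when underbidding at the jump). Taking expectations and swapping the order of summation collects, for each round $t \le H-1$, the contribution $\tfrac{1}{H}\,\E[B_t(H-t) + g_t(B_t)]$: the factor $H-t$ counts the scenarios in which the jump comes strictly later, and the single unit comes from the scenario in which the jump is exactly at $t$. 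Because the all-zero pre-jump history makes the bids at distinct rounds independent decisions, each per-round contribution can be minimized separately.

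The heart of the argument is then the per-round minimization of $f_t(b) = b(H-t) + g_t(b)$. On $\{b \ge \delta\}$ this equals $b(H-t+1)-\delta$, minimized at $b=\delta$ with value $\delta(H-t)$; on $\{b<\delta\}$ it equals $b(H-t)+1-\delta$, minimized as $b\to 0$ with value $1-\delta$. Thus $\E[f_t(B_t)] \ge \min\{\delta(H-t), 1-\delta\}$, and randomizing over $U$ cannot beat this pointwise minimum. Substituting $\delta = 1/H$ gives $\delta(H-t) = (H-t)/H \le (H-1)/H = 1-\delta$ for every $t \ge 1$, so the minimum is always $\delta(H-t)$, and summing the arithmetic series $\sum_{t=1}^{H-1}(H-t)/H^2 = (H-1)/(2H)$ yields the claimed bound $\tfrac{1}{2} - \tfrac{1}{2H}$. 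I expect the main obstacle to be conceptual rather than computational: correctly formalizing the indistinguishability argument of the second paragraph, namely that the bid conditioned on no observed jump is one and the same random variable across all realizations with $\tau \ge t$, so that the double sum legitimately decouples into independent per-round subproblems. Once that coupling is pinned down, the piecewise minimization and the series computation are routine, and the handling of $U$ reduces to the trivial bound ``expectation $\ge$ pointwise minimum.''
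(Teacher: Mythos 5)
Your proof is correct, and the bound it produces, $\sum_{t=1}^{H-1}\frac{H-t}{H^2}=\frac{1}{2}-\frac{1}{2H}$, is exactly the lemma's constant. The route differs in organization from the paper's: the paper evaluates the optimal dynamic regret of this instance by dynamic programming, i.e., backward induction on the value of the single nontrivial information state (jump not yet observed), which is how it frames the lower bound throughout Section \ref{sec:full_lb}. Your argument is in substance that same calculation unrolled: the coupling observation that on the event $\{\tau\ge t\}$ the learner's bid is one fixed random variable $B_t=\pi_t(\mathbf{0},1,U)$ is precisely what collapses the learner to one state per round, and your Fubini swap $\sum_{k}\sum_{t<k}B_t=\sum_{t}(H-t)B_t$ followed by pointwise minimization of $f_t(b)=(H-t)\,b+g(b)$ computes, round by round, what the backward recursion would compute. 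Your version is more elementary and exposes the exact constant with no recursion; the DP formulation is the one that generalizes when the post-jump continuation is not trivially zero. Two small points of hygiene: first, for a lower bound you should not say the learner \emph{bids} $\delta$ after the jump — you only need that per-round dynamic regret is nonnegative, so post-jump rounds (and the $t=H$ term $\frac{1}{H}\E[g(B_H)]$) can be dropped, which is what your computation actually does; second, the bids $B_1,\dots,B_{H-1}$ are generally \emph{not} independent (they all depend on the shared randomness $U$), but no independence is needed, since linearity of expectation and $\E[f_t(B_t)]\ge\min_{b\in[0,1]}f_t(b)$ suffice, exactly as you invoke in your final step.
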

Lemma \ref{lem:lb} shows that within a batch of length $H$, a variation budget of $1/H$ can induce $\Omega(1)$ dynamic regret for any admissible algorithm. With a total variation budget of $V_T$, we can construct $\Theta(HV_T)$ such batches. To achieve the desired lower bound, we set $H = \sqrt{T/V_T}$. However, directly concatenating $T/H$ batches as described in Lemma \ref{lem:lb} would result in $m_t$ reaching $\frac{1}{H} \cdot \frac{T}{H} = V_T$ at later stages, potentially violating the assumption that $m_t \in [0, 1]$.

To address this issue, we employ an alternating batch construction. We divide the time horizon into batches of length $H$ and indexed by $j$. For odd $j$, we use the batches constructed in Lemma \ref{lem:lb}. For even $j$, we use batches defined as follows:
\[
    m_t=\begin{cases}
        \delta,&t<\tau,\\
        0,&\tau\le t\le H,
    \end{cases}
    \]
    where $\tau$ is drawn uniformly from ${1, 2, \dots, H}$ and $\delta=\frac{1}{H}$. This alternating construction ensures that $m_t \in [0, 1]$, as depicted in Figure \ref{fig:lb}. By alternating between these types of batches, we can fully utilize the variation budget while respecting the constraint that $m_t\in[0,1]$.

     \begin{figure}[htbp]
    \centering
    \caption{An illustration of the construction for the lower bound. In odd-numbered batches, $m_t$ jumps from $0$ to $\frac{1}{H}$ at random locations, while in even-numbered batches, $m_t$ jumps from $\frac{1}{H}$ back to $0$. The parameter $H$ is carefully chosen to satisfy the variation budget constraint and ensure that we obtain the desired lower bound.}
    \includegraphics[scale=.51]{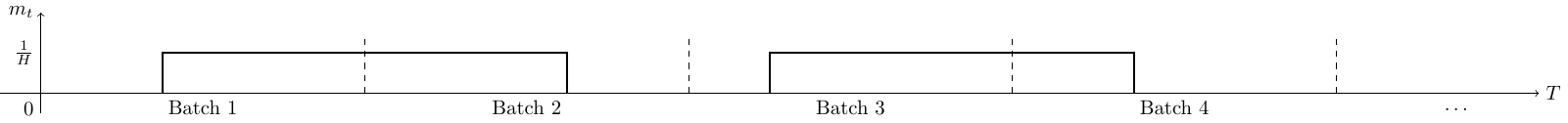}
    \label{fig:lb}
\end{figure}
   



We stitch batches constructed in Lemma \ref{lem:lb}, and establish the minimax-optimal lower bound as follows:
\begin{theorem}\label{thm:full_lb}
    In online first-price auctions, for any $V_T\in\lt[\frac{36}{T},\frac{T}{4}\rt]$, there exists  $(v_t,m_t)_{t=1}^T$ such that $\sum_{t=2}^T|m_t-m_{t-1}|\le V_T$ and the expected dynamic regret of any admissible policy satisfies:
    \[
    \inf_{\pi\in\Pi}\sup_{(v_t,m_t)_{t=1}^T\in\cV}\E\lt[\textnormal{DR}_T(\pi)\rt]\ge\frac{\sqrt{TV_T}}{16},
    \]
    where $\Pi$ is the set of admissible policies.
\end{theorem}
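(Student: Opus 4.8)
The plan is to prove the $\Omega(\sqrt{TV_T})$ bound by stitching together $\Theta(T/H)$ independent copies of the single-batch hard instance from Lemma \ref{lem:lb}, with the batch length $H$ calibrated so that the concatenated sequence just exhausts the variation budget. Concretely, I would set $H=\lceil\sqrt{T/V_T}\rceil$, so that $H\ge 2$ (guaranteed by $V_T\le T/4$, which gives $\sqrt{T/V_T}\ge 2$) and Lemma \ref{lem:lb} applies with $\delta=1/H\le \tfrac12$; the lower end of the range, $V_T\ge 36/T$, forces $\sqrt{TV_T}\ge 6$ and $H\lesssim T/6$, ensuring there are enough batches and that the final constant closes. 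I would then partition the horizon into $n=\lfloor T/H\rfloor$ batches of length $H$ and use the alternating construction: odd batches jump $0\to 1/H$ (exactly Lemma \ref{lem:lb}) and even batches jump $1/H\to 0$ (its mirror image). Since an odd batch always ends at $m=1/H$ while the following even batch begins at $m=1/H$, and symmetrically for the reverse transition, no variation is incurred at the batch boundaries, so every $m_t$ stays in $\{0,1/H\}\subseteq[0,1]$.

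Second, I would verify membership in $\cV$ and count the contributing batches. The sequence takes values in $\{0,1/H\}$ with exactly one level transition per batch and continuous boundaries, so the total temporal variation is $n/H\le (T/H)/H=T/H^2\le V_T$ by the choice $H\ge\sqrt{T/V_T}$; hence the instance lies in $\cV$. Because the dynamic benchmark $\sum_t\max\{v_t-m_t,0\}$ is additive across rounds and the per-round regret is nonnegative, the total expected dynamic regret decomposes as a sum of per-batch expected regrets, each nonnegative; I may therefore lower bound the total by the sum over the odd batches alone (even batches merely reset $m$ to $0$ and contribute $\ge 0$). The number of odd batches is $\Theta(n)=\Theta(\sqrt{TV_T})$, and each will be shown to contribute $\tfrac12-\tfrac{1}{2H}\ge\tfrac14$.

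Third — and this is where the main care is needed — I would lift the single-batch bound of Lemma \ref{lem:lb} to the \emph{embedded} batches. The key observation is that the jump locations $\tau_1,\dots,\tau_n$ are drawn independently, so conditioned on the entire history $\mathcal{H}$ up to the start of an odd batch $j$, the location $\tau_j$ is still uniform on $\{1,\dots,H\}$ and independent of $\mathcal{H}$. Thus the restriction of any non-anticipatory policy $\pi\in\Pi$ to batch $j$ is itself an admissible non-anticipatory policy for the single-batch game of Lemma \ref{lem:lb}, now equipped with a fixed conditioned prefix that carries no information about $\tau_j$. Applying Lemma \ref{lem:lb} conditionally yields $\E[\text{per-batch DR on }j\mid\mathcal{H}]\ge\tfrac12-\tfrac{1}{2H}$, and the tower property gives the same bound unconditionally. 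Summing over the $\Theta(\sqrt{TV_T})$ odd batches and tracking the floor/ceiling constants — absorbing the $O(1)$ losses using $\sqrt{TV_T}\ge 6$ — delivers the claimed $\tfrac{\sqrt{TV_T}}{16}$.

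The main obstacle is precisely this conditional reduction: one must argue cleanly that the learner's accumulated history from earlier batches confers no usable advantage within the current batch, so that the $\Omega(1)$ per-batch regret of Lemma \ref{lem:lb} survives concatenation. This is exactly what makes the independence of the $\tau_j$'s and the non-anticipatory restriction essential. A secondary, more routine difficulty is the bookkeeping of the rounding in $H=\lceil\sqrt{T/V_T}\rceil$ and of the leftover rounds (at most $H$ of them, which can be assigned a constant $m_t$ to contribute zero variation and nonnegative regret), so that the final constant is exactly $1/16$.
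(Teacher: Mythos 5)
Your proposal is correct and takes essentially the same approach as the paper: the same single-batch hard instance (Lemma \ref{lem:lb}), the same alternating up/down jump construction to keep $m_t\in\{0,1/H\}\subseteq[0,1]$ with no variation at batch boundaries, the same calibration $H=\Theta(\sqrt{T/V_T})$, and the same stitching argument in which the independence of the uniform jump times lets Lemma \ref{lem:lb} be applied conditionally batch by batch, yielding the $\sqrt{TV_T}/16$ bound after the same constant bookkeeping.
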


\begin{remark}
    Due to the construction of the lower bound, we can explicitly inform the learner about the creation of the opponents' highest bid batches, the variation budget allocated to each batch, and the total number of batches. The lower bound remains valid under this setting. This implies that our lower bound holds even when the learner is aware of $V_T$, whereas our upper bound does not require prior knowledge of $V_T$.
\end{remark}

\subsection{Minimax Lower Bound under the Discrete Switching Constraint.}
\label{sec:full_lt_lb}

We also establish a corresponding minimax lower bound for the case of $L_T = o(T)$ by reducing it to the proof of Theorem \ref{thm:full_lb}. 
\begin{theorem}\label{thm:full_lt2}
    In online first-price auctions, for any $L_T\in[T]$ and $L_T\le\frac{T}{3}$, there exists $(v_t,m_t)_{t=1}^T$ such that $\sum_{t=2}^T\ind(m_t\ne m_{t-1})\le L_T$ and the expected dynamic regret of any admissible policy satisfies:
    \[
    \inf_{\pi\in\Pi}\sup_{(v_t,m_t)_{t=1}^T\in\cL}\E\lt[\textnormal{DR}_T(\pi)\rt]\ge\frac{L_T}{8}.
    \]
\end{theorem}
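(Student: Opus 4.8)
The plan is to prove Theorem~\ref{thm:full_lt2} by reducing the $L_T$-constrained lower bound to the $V_T$-constrained construction already developed for Theorem~\ref{thm:full_lb}, exploiting the fact that each batch in the construction of Lemma~\ref{lem:lb} contains exactly one jump in $m_t$. The key observation is that a single random jump inside a batch of length $H$ contributes $1$ to the switching count $L_T$ (since $\ind(m_\tau \neq m_{\tau-1}) = 1$ at the jump location and $0$ elsewhere), while the batch-level dynamic regret is already shown to be $\Omega(1)$. This is exactly the relationship we want: switches and $\Omega(1)$-regret batches are in one-to-one correspondence, which decouples the argument from the temporal-variation magnitude $\delta$ entirely.

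First I would take the alternating batch construction from the proof of Theorem~\ref{thm:full_lb}, but now the relevant accounting is the number of switches rather than the accumulated variation. Since I no longer need to keep $m_t \in [0,1]$ by controlling the magnitude of accumulated jumps, I can afford to set $\delta$ to any convenient constant (for instance $\delta = \tfrac12$, or retain the alternating $0 \leftrightarrow \delta$ pattern); the single jump per batch guarantees the switching budget is spent frugally. Concretely, I would partition the horizon into batches of length $H \coloneqq \lfloor T/L_T \rfloor$, so that there are $\Theta(L_T)$ batches, each containing exactly one jump and hence contributing exactly $1$ to the switching count. This gives $\sum_{t=2}^T \ind(m_t \neq m_{t-1}) \le L_T$ as required, and the constraint $L_T \le T/3$ ensures $H \ge 3$ so that Lemma~\ref{lem:lb} applies.

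Next, I would invoke Lemma~\ref{lem:lb} (or its trivial rescaling to constant $\delta$) to assert that any non-anticipatory policy suffers at least $\tfrac12 - \tfrac{1}{2H} \ge \Omega(1)$ expected dynamic regret within each batch, where the lower bound uses the one-sided Lipschitz dilemma: bidding low forfeits $\Omega(1)$ reward once the jump triggers, while bidding high wastes reward on every pre-jump round. Because the batches use freshly drawn, independent jump times $\tau$ and the regret bound in Lemma~\ref{lem:lb} holds against \emph{any} non-anticipatory policy (so it survives conditioning on the past batches), I can sum the per-batch regrets over the $\Theta(L_T)$ batches. Summing $\Theta(L_T)$ copies of $\Omega(1)$ regret yields the claimed $\Omega(L_T)$ total, and tracking the constants carefully (using $\delta$ constant and $H \ge 3$) should deliver the explicit $\tfrac{L_T}{8}$ bound stated in the theorem.

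The main obstacle I anticipate is not the reduction itself but the bookkeeping needed to make the per-batch regret truly additive and to verify the explicit constant $\tfrac18$. The subtlety is that the learner's behavior in batch $j$ may depend on the entire observed history from batches $1,\dots,j-1$, so I must argue that Lemma~\ref{lem:lb}'s $\Omega(1)$ guarantee holds \emph{conditionally} on any realization of the earlier batches---that is, the adversary's fresh independent draw of $\tau$ in each batch makes the intra-batch minimax argument self-contained regardless of the prefix. Handling this cleanly requires either a tower-of-expectations argument over the batch filtration or an appeal to the fact that the dynamic-programming lower bound in Lemma~\ref{lem:lb} is a worst-case (minimax) statement over all policies, hence unaffected by arbitrary initial information. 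I expect the alternating $0 \leftrightarrow \delta$ structure to be retained only for notational consistency with Figure~\ref{fig:lb}; the essential content is the clean switch-to-regret correspondence, and once the conditioning is justified, the final summation and constant-chasing are routine.
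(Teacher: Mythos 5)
Your proposal is correct and takes essentially the same route as the paper: reuse the batched construction behind Lemma~\ref{lem:lb} and Theorem~\ref{thm:full_lb}, note that each batch spends exactly one unit of switching budget regardless of the jump magnitude, take batch length $\lfloor T/L_T\rfloor \ge 3$ (using $L_T \le T/3$), and sum the $\Omega(1)$ per-batch minimax regrets via a conditioning/tower argument over the independent jump times. The one bookkeeping caveat is that only upward-jump batches yield $\Omega(1)$ regret (a downward jump costs the learner only $O(\delta)$ once detected, and in a non-alternating layout the reset at each batch boundary burns an extra switch), so effectively about $L_T/2$ batches contribute at least $\tfrac12 - \tfrac{1}{2H} \ge \tfrac13$ each---which still yields $L_T/6 \ge L_T/8$, so your constant-chasing goes through.
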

It is insightful to compare Theorems \ref{thm:full_lb} and \ref{thm:full_lt2} with Proposition \ref{prop:lb1}. Proposition \ref{prop:lb1} essentially establishes an $\Omega(V_T)$ lower bound for $V_T\in\lt[0,\frac{T}{2}\rt]$ and an $\Omega(L_T)$ lower bound when $L_T = \Theta(T)$. In contrast, the lower bounds in Theorems \ref{thm:full_lb} and \ref{thm:full_lt2} are significantly stronger.

\section{Numerical Experiments}
\label{sec:experiment}
In this section, we conduct numerical experiments to evaluate the performance of our proposed algorithms and compare them with baseline methods. Our experiments consist of two main parts: in the first part, we generate the sequence of opponents' highest bids based on four slowly varying patterns as considered in \citet{besbes2015non,besbes2019optimal,cheung2022hedging}, and then confirm our theoretical findings by evaluating the slopes of the log-log plots of dynamic regret with respect to different time horizons. In the second part, we run our proposed algorithms as well as two baseline methods to bid in a multi-agent bidding environment, where each opponent applies the budget-pacing policy from \citet{gaitonde2022budget}. We find that our algorithms outperform the baselines, especially in regimes where opponents have limited budgets, even when the sequence of opponents' values varies rapidly.

\subsection{Dynamic Regret Growth with Varying Time Horizons}
\label{sec:loglog}

We consider four different patterns for the opponents' highest bid sequence $(m_t)_{t=1}^T$ and then evaluate the slopes of the log-log plots of dynamic regret with respect to different time horizons to confirm the rates predicted by our theoretical findings. The first three patterns are constructed using the following building blocks from \citet{besbes2015non}:
\begin{equation}\label{eq:blocks}
    m_t^{\text{constant}} = \begin{cases}
    0, &  t \le \tau, \\
    1, & t>\tau
\end{cases} \quad
m_t^{\text{exponential}} = \begin{cases}
    0, &  t \le \tau, \\
    1 - e^{-10(t-\tau)/T}, & t>\tau
\end{cases} \quad
m_t^{\text{linear}} = \begin{cases}
    0, &  t \le \tau, \\
    \frac{t-\tau}{T-\tau}, & t>\tau
\end{cases}
\end{equation}
where $t\in[T]$ and $\tau$ is uniformly chosen from $\{1,2,\dots, \lfloor\beta T\rfloor\}$ with $\beta=\frac{2}{3}$. These three patterns have variations bounded by $1$.

To generate opponents' highest bid sequences with larger variations, we partition the time horizon $T$ into $\lceil V_T\rceil$ segments, each of length at least $3$, and apply one of the three building blocks from Equation \eqref{eq:blocks} to each segment. The fourth pattern is generated by the sinusoidal wave $m_t=\frac{1}{2}+\frac{1}{2}\sin\left(\frac{V_T\pi t}{T}\right)$, which is employed in \citet{besbes2019optimal,cheung2022hedging}.

\textbf{Experimental setup:} We choose $T\in\{5000,8000,\dots, 59000\}$ and let $V_T=\frac{1}{4}\cdot T^\alpha$ for $\alpha\in\{0.1,0.3,0.5,0.7,0.9\}$. The learner's values $(v_t)_{t=1}^T$ are drawn i.i.d. from the uniform distribution on $[0,1]$. We consider the following algorithms:
\begin{itemize}
    \item \textbf{Theoretical Bound}: The theoretical dynamic regret upper bound $O\left(\sqrt{TV_T}\ln T\right)$.
    \item \textbf{AR-Prod}: Algorithm \ref{alg:full_vt} for unknown $V_T$. We use $\eta=1$, $\epsilon=\frac{4}{\sqrt{T}}$, and $c=\frac{1}{T}$.
    \item \textbf{AR-OMD}: Algorithm \ref{alg:full_lt} for unknown $L_T$. We use $\epsilon=\frac{1}{T^{0.9}}$, $\eta=\sqrt{\ln(T^{0.9})}$, and we consider $m_t\ne m_{t-1}$ if $|m_t-m_{t-1}|\ge 10^{-6}$.
    \item \textbf{BOBW}: Algorithm \ref{alg:best} with best-of-both-worlds guarantee. We use the default parameters specified in Algorithm \ref{alg:best}.
\end{itemize}

\begin{figure}[htbp]
    \centering
    \caption{Slopes of dynamic regret rates against varying time horizons for different opponents' highest bid sequence patterns and values of $\alpha$.}
    \includegraphics[width=1\textwidth]{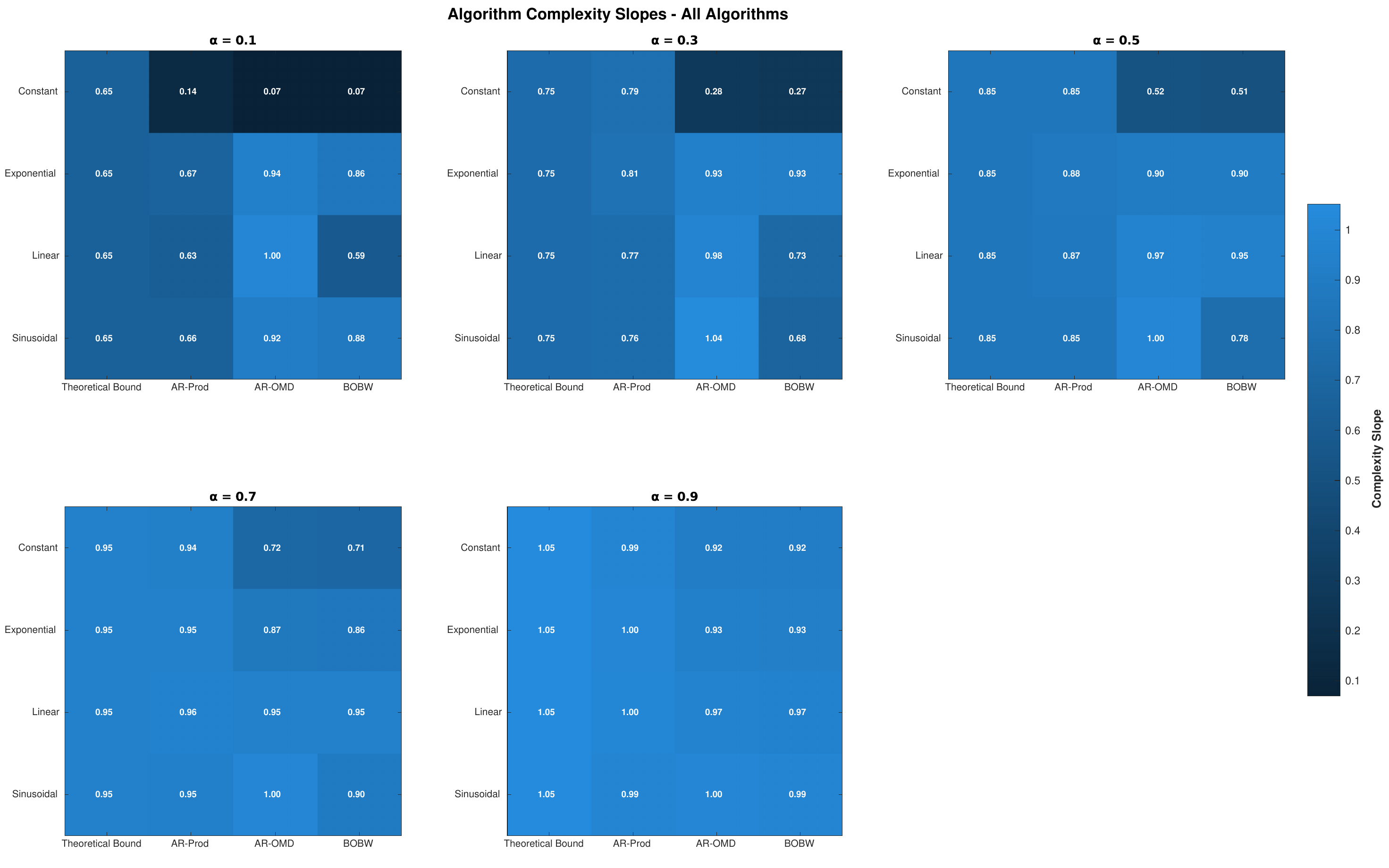}
    \label{fig:loglog_slope}
\end{figure}

\textbf{Results:} For each value of $\alpha$, we compute the slopes of log-log plots of average dynamic regret against $T$, and present all results in Figure \ref{fig:loglog_slope}. For all four temporal patterns, we observe that the slopes of AR-Prod align closely with the theoretical upper bound, which is consistent with Theorem \ref{thm:full_vt_ub}. For the \enquote{Constant} pattern, AR-OMD's slope approaches $\alpha$ because this pattern implies $V_T=L_T$ and its theoretical dynamic regret bound is $O\left(L_T\sqrt{\ln T}\right)=O\left(T^\alpha\sqrt{\ln T}\right)$ by Theorem \ref{thm:full_lt1}. For the remaining three patterns, the slopes of AR-OMD are close to $1$. This is because for these three continuously evolving patterns, $L_T=\Omega(T)$ even when $V_T$ is very small. Finally, BOBW's slope is consistently the minimum of the AR-Prod and AR-OMD slopes, confirming Theorem \ref{thm:bobw}.

\subsection{Performance Against Budget-Pacing Bidders}
We evaluate our algorithms against opponents using the budget-pacing algorithm from \citet{gaitonde2022budget}. We consider this experimental setting for two reasons: (i) the budget-pacing policy is an important strategy in both second-price auctions \citep{balseiro2019learning} and first-price auctions \citep{gaitonde2022budget}; (ii) when the value of each opponent varies slowly, the sequence of opponents' highest bids is also slowly varying. We find that our algorithms outperform the baselines, especially in regimes where opponents have limited budgets, even when the sequence of opponents' values varies rapidly.

For the budget-pacing algorithm from \citet{gaitonde2022budget}, the pacing multiplier $\mu_t$ is adaptively adjusted based on the observed expenditure to maintain a target spending rate $\rho_k = B_k/T$, where $k$ is the index of the $k$-th agent and $B_k$ is the initial budget of the $k$-th agent. We set $\epsilon_k=\frac{1}{\sqrt{T}}$ and $\bar{\mu}=\frac{T}{B_k}-1$ for this experiment based on suggestions from \citet{gaitonde2022budget}.

\textbf{Slowly Varying Property}: When opponent bidders have slowly varying valuations, the resulting sequence of opponents' highest bids is also slowly varying. This occurs because, given the current budget, Lagrangian multiplier, and ad impression value, each budget-pacing bidder's bid is precisely determined. This makes budget-pacing a suitable opponent strategy for validating our algorithms' performance in practical scenarios.

\textbf{Experimental setup}: We consider two budget regimes with 20 opponents:
\begin{itemize}
    \item \textbf{Sufficient budget}: Each opponent has budget $T/20$
    \item \textbf{Insufficient budget}: Each opponent has budget $T/40$
\end{itemize}
The sufficient budget regime corresponds to the case where the combined budget of all opponents is sufficient to purchase every ad impression, potentially leaving nothing for the learner. In contrast, the insufficient budget regime refers to the case where the opponents' budgets are collectively insufficient to purchase every ad impression.

The learner's values are i.i.d. uniform on $[0,1]$, while opponent values follow the four patterns from Section \ref{sec:loglog}, scaled by $0.8$ to ensure a reasonable winning probability for the learner. We set $T=12000$, $V_T=\frac{1}{4}\cdot T^\alpha$ with $\alpha\in\{0,0.1,\dots,1\}$, and average results over 50 runs. Baselines include the Hedge algorithm and the SEW algorithm \citep{han2020learning} \footnote{We use the official implementation of the SEW policy \citep{han_personal} in our numerical simulations.}.

\textbf{Results:} Figures \ref{fig:vary_alpha1} and \ref{fig:vary_alpha2} show that our algorithms, particularly AR-Prod, outperform baselines in both budget regimes, even in the case of $\alpha=1$. The advantage is more pronounced in the insufficient budget regime, where opponents bid more conservatively due to lower target spend rates, creating slowly varying opponents' highest bid sequences that our algorithms can exploit.

\begin{figure}
    \centering
    \caption{Cumulative rewards against budget-pacing bidders in the sufficient budget regime.}
    \includegraphics[width=1\textwidth]{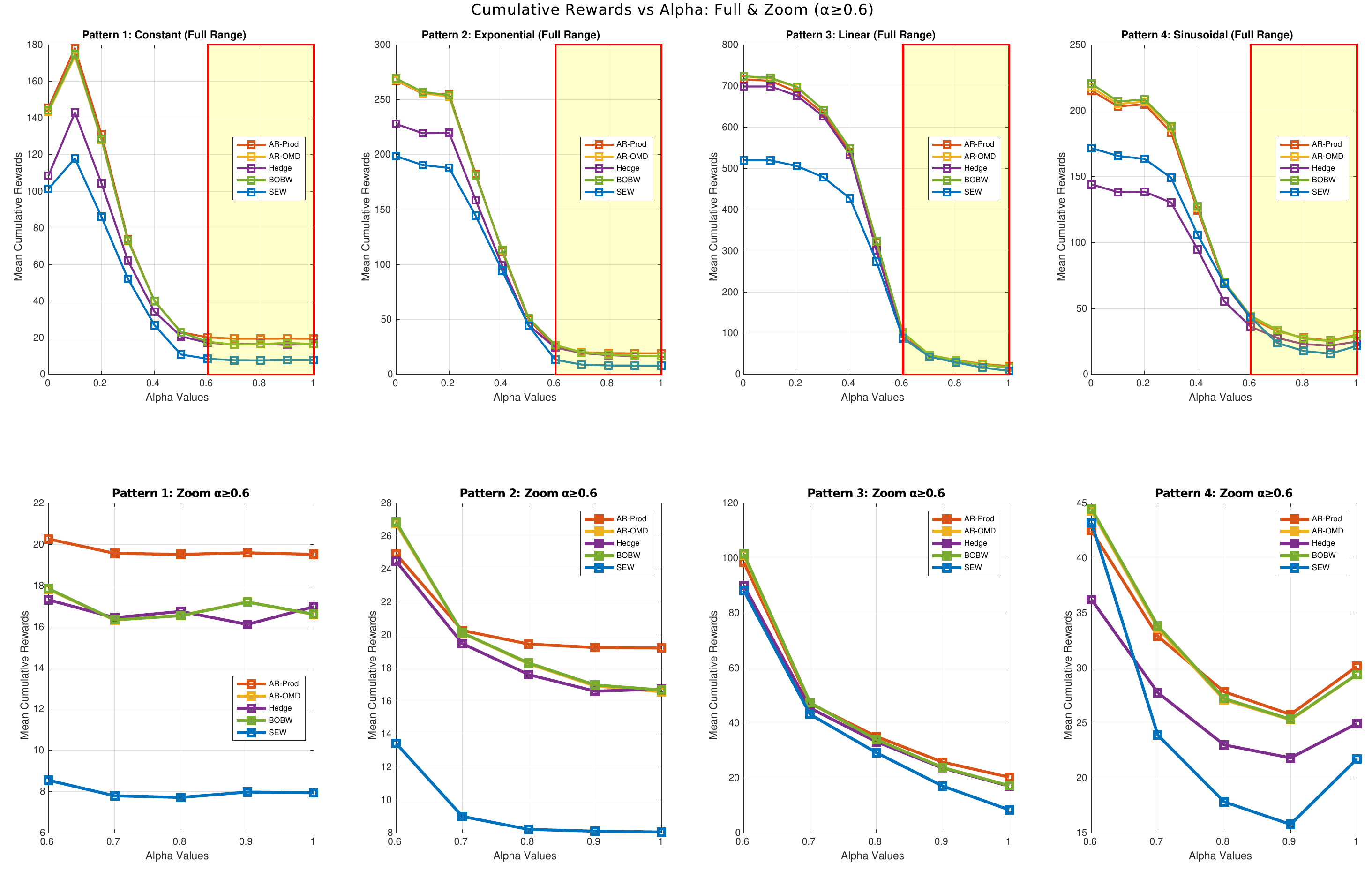}
    \label{fig:vary_alpha1}
\end{figure}

\begin{figure}
    \centering
    \caption{Cumulative rewards against budget-pacing bidders in the insufficient budget regime.}
    \includegraphics[width=1\textwidth]{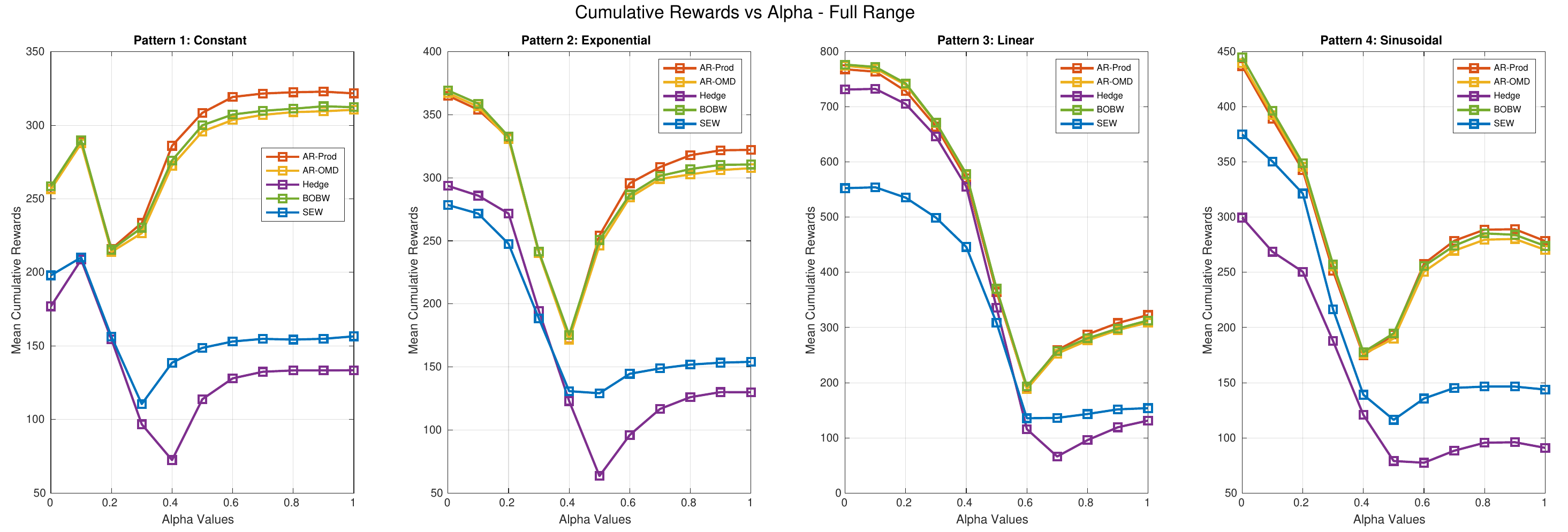}
    \label{fig:vary_alpha2}
\end{figure}

\textbf{Robustness analysis:} To ensure that our advantage arises from algorithmic adaptivity rather than simply from exploiting slowly varying sequences, we conducted additional experiments where both the learner's and the opponents' values are drawn from the same distributions: uniform, a truncated Gaussian (with mean $0.4$ and standard deviation $0.2$), or Beta$(3,3)$. In these experiments, we set $T=12000$ with $20$ budget-pacing bidders and vary the initial budget of each bidder. We repeat each experiment $50$ times and report the average performance. As shown in Figure \ref{fig:vary_b}, our algorithms remain competitive in regimes with sufficient budgets and outperform baselines in regimes with insufficient budgets, confirming that their adaptive capabilities extend beyond merely capitalizing on slow variation.
\begin{figure}
    \centering
    \caption{Cumulative rewards against budget-pacing bidders for different initial budgets.}
    \includegraphics[width=1\textwidth]{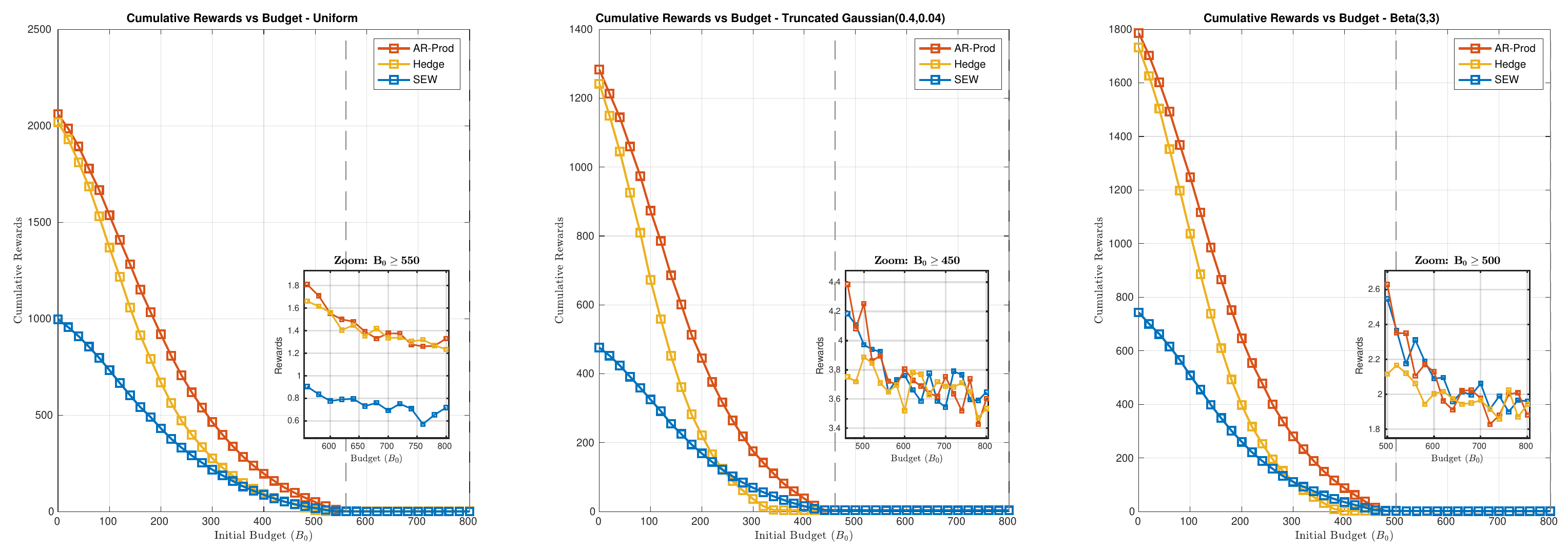}
    \label{fig:vary_b}
\end{figure}



\section{Conclusion}
\label{sec:conclusion}
This work examines online first-price auctions within non-stationary environments. While prior research typically focuses on competing against the best fixed policy in hindsight, such a policy can be suboptimal even in environments with mild non-stationarity. We instead investigate conditions under which competition against a dynamic benchmark, achieving the highest possible revenue, is feasible. We identify two measures of regularity on the opponents' highest bid sequence and establish minimax-optimal dynamic regret rates for the class of auction instances where the sequence of opponents' highest bids satisfies either of these regularity constraints.
For future work, it would be valuable to investigate tight dynamic regret rates under settings where only winning-bid feedback or binary feedback is available. From a technical perspective, our analysis considers the dynamic regret of a specific one-sided Lipschitz function with a single discontinuity. Given the existence of important one-sided Lipschitz functions with multiple discontinuities \citep{dutting2023optimal}, investigating the applicability of our algorithms to these more general settings presents a compelling research direction.

\ACKNOWLEDGMENT{
We thank Yanjun Han from New York University for providing the code of the SEW policy. This research is generously supported by the NSF grant CCF-2106508.
}



\bibliography{auction}

\begin{thebibliography}{71}
\providecommand{\natexlab}[1]{#1}
\providecommand{\url}[1]{\texttt{#1}}
\providecommand{\urlprefix}{URL }

\bibitem[{Aggarwal et~al.(2025)Aggarwal, Fikioris, \protect\BIBand{} Zhao}]{aggarwal2025no}
Aggarwal G, Fikioris G, Zhao M (2025) No-regret algorithms in non-truthful auctions with budget and roi constraints. \emph{Proceedings of the ACM on Web Conference 2025}, 1398--1415.

\bibitem[{Ai et~al.(2022)Ai, Wang, Li, Zhang, Huang, \protect\BIBand{} Deng}]{ai2022no}
Ai R, Wang C, Li C, Zhang J, Huang W, Deng X (2022) No-regret learning in repeated first-price auctions with budget constraints. \emph{arXiv preprint arXiv:2205.14572} .

\bibitem[{Akbarpour \protect\BIBand{} Li(2020)}]{akbarpour2020credible}
Akbarpour M, Li S (2020) Credible auctions: A trilemma. \emph{Econometrica} 88(2):425--467, \urlprefix\url{http://dx.doi.org/https://doi.org/10.3982/ECTA15925}.

\bibitem[{Alcobendas \protect\BIBand{} Zeithammer(2021)}]{alcobendas2021adjustment}
Alcobendas M, Zeithammer R (2021) Adjustment of bidding strategies after a switch to first-price rules. \emph{Available at SSRN 4036006} .

\bibitem[{Auer et~al.(2002)Auer, Cesa-Bianchi, \protect\BIBand{} Gentile}]{auer2002adaptive}
Auer P, Cesa-Bianchi N, Gentile C (2002) Adaptive and self-confident on-line learning algorithms. \emph{Journal of Computer and System Sciences} 64(1):48--75.

\bibitem[{Baby \protect\BIBand{} Wang(2021)}]{baby2021optimal}
Baby D, Wang YX (2021) Optimal dynamic regret in exp-concave online learning. \emph{Conference on Learning Theory}, 359--409 (PMLR).

\bibitem[{Baby \protect\BIBand{} Wang(2022)}]{baby2022optimal}
Baby D, Wang YX (2022) Optimal dynamic regret in proper online learning with strongly convex losses and beyond. \emph{International Conference on Artificial Intelligence and Statistics}, 1805--1845 (PMLR).

\bibitem[{Badanidiyuru et~al.(2023)Badanidiyuru, Feng, \protect\BIBand{} Guruganesh}]{badanidiyuru2023learning}
Badanidiyuru A, Feng Z, Guruganesh G (2023) Learning to bid in contextual first price auctions. \emph{Proceedings of the ACM Web Conference 2023}, 3489--3497.

\bibitem[{Balseiro et~al.(2023)Balseiro, Golrezaei, Mahdian, Mirrokni, \protect\BIBand{} Schneider}]{balseiro2023contextual}
Balseiro S, Golrezaei N, Mahdian M, Mirrokni V, Schneider J (2023) Contextual bandits with cross-learning. \emph{Mathematics of Operations Research} 48(3):1607--1629.

\bibitem[{Balseiro \protect\BIBand{} Gur(2019)}]{balseiro2019learning}
Balseiro SR, Gur Y (2019) Learning in repeated auctions with budgets: Regret minimization and equilibrium. \emph{Management Science} 65(9):3952--3968.

\bibitem[{Banchio \protect\BIBand{} Mantegazza(2023)}]{banchio2023adaptive}
Banchio M, Mantegazza G (2023) Adaptive algorithms and collusion via coupling. \emph{EC}, 208.

\bibitem[{Banchio \protect\BIBand{} Skrzypacz(2022)}]{banchio2022artificial}
Banchio M, Skrzypacz A (2022) Artificial intelligence and auction design. \emph{Proceedings of the 23rd ACM Conference on Economics and Computation}, 30--31.

\bibitem[{Besbes et~al.(2015)Besbes, Gur, \protect\BIBand{} Zeevi}]{besbes2015non}
Besbes O, Gur Y, Zeevi A (2015) Non-stationary stochastic optimization. \emph{Operations research} 63(5):1227--1244.

\bibitem[{Besbes et~al.(2019)Besbes, Gur, \protect\BIBand{} Zeevi}]{besbes2019optimal}
Besbes O, Gur Y, Zeevi A (2019) Optimal exploration-exploitation in a multi-armed-bandit problem with non-stationary rewards. \emph{Stochastic Systems} 9(4):319--337.

\bibitem[{Bichler et~al.(2023)Bichler, Fichtl, \protect\BIBand{} Oberlechner}]{bichler2023computing}
Bichler M, Fichtl M, Oberlechner M (2023) Computing bayes--nash equilibrium strategies in auction games via simultaneous online dual averaging. \emph{Operations Research} .

\bibitem[{Bigler(2019)}]{bigler2019rolling}
Bigler J (2019) Rolling out first price auctions to {Google Ad Manager} partners. \urlprefix\url{https://blog.google/products/admanager/rolling-out-first-price-auctions-google-ad-manager-partners/}, director, Product Management, Google.

\bibitem[{Castiglioni et~al.(2022)Castiglioni, Celli, \protect\BIBand{} Kroer}]{castiglioni2022online}
Castiglioni M, Celli A, Kroer C (2022) Online learning with knapsacks: the best of both worlds. \emph{International Conference on Machine Learning}, 2767--2783 (PMLR).

\bibitem[{Cesa-Bianchi et~al.(2024)Cesa-Bianchi, Cesari, Colomboni, Fusco, \protect\BIBand{} Leonardi}]{cesa2024role}
Cesa-Bianchi N, Cesari T, Colomboni R, Fusco F, Leonardi S (2024) The role of transparency in repeated first-price auctions with unknown valuations. \emph{Proceedings of the 56th Annual ACM Symposium on Theory of Computing}, 225--236.

\bibitem[{Cesa-Bianchi et~al.(2017)Cesa-Bianchi, Gaillard, Gentile, \protect\BIBand{} Gerchinovitz}]{cesa2017algorithmic}
Cesa-Bianchi N, Gaillard P, Gentile C, Gerchinovitz S (2017) Algorithmic chaining and the role of partial feedback in online nonparametric learning. \emph{Conference on Learning Theory}, 465--481 (PMLR).

\bibitem[{Cesa-Bianchi \protect\BIBand{} Lugosi(2006)}]{cesa2006prediction}
Cesa-Bianchi N, Lugosi G (2006) \emph{Prediction, learning, and games} (Cambridge university press).

\bibitem[{Cesa-Bianchi et~al.(2007)Cesa-Bianchi, Mansour, \protect\BIBand{} Stoltz}]{cesa2007improved}
Cesa-Bianchi N, Mansour Y, Stoltz G (2007) Improved second-order bounds for prediction with expert advice. \emph{Machine Learning} 66:321--352.

\bibitem[{Chen et~al.(2025)Chen, Wang, \protect\BIBand{} Wang}]{chen2025learning}
Chen N, Wang C, Wang L (2025) Learning and optimization with seasonal patterns. \emph{Operations Research} 73(2):894--909, \urlprefix\url{http://dx.doi.org/10.1287/opre.2023.0017}.

\bibitem[{Chen \protect\BIBand{} Peng(2023)}]{chen2023complexity}
Chen X, Peng B (2023) Complexity of equilibria in first-price auctions under general tie-breaking rules. \emph{Proceedings of the 55th Annual ACM Symposium on Theory of Computing}, 698--709.

\bibitem[{Cheung et~al.(2022)Cheung, Simchi-Levi, \protect\BIBand{} Zhu}]{cheung2022hedging}
Cheung WC, Simchi-Levi D, Zhu R (2022) Hedging the drift: Learning to optimize under nonstationarity. \emph{Management Science} 68(3):1696--1713.

\bibitem[{Cheung et~al.(2023)Cheung, Simchi-Levi, \protect\BIBand{} Zhu}]{cheung2023nonstationary}
Cheung WC, Simchi-Levi D, Zhu R (2023) Nonstationary reinforcement learning: The blessing of (more) optimism. \emph{Management Science} 69(10):5722--5739.

\bibitem[{Chiang et~al.(2012)Chiang, Yang, Lee, Mahdavi, Lu, Jin, \protect\BIBand{} Zhu}]{chiang2012online}
Chiang CK, Yang T, Lee CJ, Mahdavi M, Lu CJ, Jin R, Zhu S (2012) Online optimization with gradual variations. \emph{Conference on Learning Theory}, 6--1 (JMLR Workshop and Conference Proceedings).

\bibitem[{Choi et~al.(2020)Choi, Mela, Balseiro, \protect\BIBand{} Leary}]{choi2020online}
Choi H, Mela CF, Balseiro SR, Leary A (2020) Online display advertising markets: A literature review and future directions. \emph{Information Systems Research} 31(2):556--575.

\bibitem[{Conitzer et~al.(2022)Conitzer, Kroer, Panigrahi, Schrijvers, Stier-Moses, Sodomka, \protect\BIBand{} Wilkens}]{conitzer2022pacing}
Conitzer V, Kroer C, Panigrahi D, Schrijvers O, Stier-Moses NE, Sodomka E, Wilkens CA (2022) Pacing equilibrium in first price auction markets. \emph{Management Science} 68(12):8515--8535.

\bibitem[{Cover(1966)}]{cover1966behavior}
Cover TM (1966) \emph{Behavior of sequential predictors of binary sequences}. Number 7002 (Stanford University, Stanford Electronics Laboratories, Systems Theory~…).

\bibitem[{Despotakis et~al.(2021)Despotakis, Ravi, \protect\BIBand{} Sayedi}]{despotakis2021first}
Despotakis S, Ravi R, Sayedi A (2021) First-price auctions in online display advertising. \emph{Journal of Marketing Research} 58(5):888--907.

\bibitem[{D{\"u}tting et~al.(2023)D{\"u}tting, Guruganesh, Schneider, \protect\BIBand{} Wang}]{dutting2023optimal}
D{\"u}tting P, Guruganesh G, Schneider J, Wang JR (2023) Optimal no-regret learning for one-sided lipschitz functions. \emph{International Conference on Machine Learning}, 8836--8850 (PMLR).

\bibitem[{Edelman et~al.(2007)Edelman, Ostrovsky, \protect\BIBand{} Schwarz}]{edelman2007internet}
Edelman B, Ostrovsky M, Schwarz M (2007) Internet advertising and the generalized second-price auction: Selling billions of dollars worth of keywords. \emph{American economic review} 97(1):242--259.

\bibitem[{Fikioris \protect\BIBand{} Tardos(2023)}]{fikioris2023liquid}
Fikioris G, Tardos {\'E} (2023) Liquid welfare guarantees for no-regret learning in sequential budgeted auctions. \emph{Proceedings of the 24th ACM Conference on Economics and Computation}, 678--698.

\bibitem[{Filos-Ratsikas et~al.(2024)Filos-Ratsikas, Giannakopoulos, Hollender, \protect\BIBand{} Kokkalis}]{filos2024computation}
Filos-Ratsikas A, Giannakopoulos Y, Hollender A, Kokkalis C (2024) On the computation of equilibria in discrete first-price auctions. \emph{arXiv preprint arXiv:2402.12068} .

\bibitem[{Filos-Ratsikas et~al.(2021)Filos-Ratsikas, Giannakopoulos, Hollender, Lazos, \protect\BIBand{} Po{\c{c}}as}]{filos2021complexity}
Filos-Ratsikas A, Giannakopoulos Y, Hollender A, Lazos P, Po{\c{c}}as D (2021) On the complexity of equilibrium computation in first-price auctions. \emph{Proceedings of the 22nd ACM Conference on Economics and Computation}, 454--476.

\bibitem[{Gaitonde et~al.(2022)Gaitonde, Li, Light, Lucier, \protect\BIBand{} Slivkins}]{gaitonde2022budget}
Gaitonde J, Li Y, Light B, Lucier B, Slivkins A (2022) Budget pacing in repeated auctions: Regret and efficiency without convergence. \emph{arXiv preprint arXiv:2205.08674} .

\bibitem[{{Google Developers}(2024)}]{google_openrtb_adx_proto}
{Google Developers} (2024) {OpenRTB Extensions Protocol Buffer | Real-time Bidding | Google for Developers}. \url{https://developers.google.com/authorized-buyers/rtb/downloads/openrtb-adx-proto}, accessed: 2025-01-03.

\bibitem[{Gravin et~al.(2016)Gravin, Peres, \protect\BIBand{} Sivan}]{gravin2016towards}
Gravin N, Peres Y, Sivan B (2016) Towards optimal algorithms for prediction with expert advice. \emph{Proceedings of the twenty-seventh annual ACM-SIAM symposium on Discrete algorithms}, 528--547 (SIAM).

\bibitem[{Han(2024)}]{han_personal}
Han Y (2024) {SEW} algorithm implementation. Personal communication.

\bibitem[{Han et~al.(2025)Han, Weissman, \protect\BIBand{} Zhou}]{han2025optimal}
Han Y, Weissman T, Zhou Z (2025) Optimal no-regret learning in repeated first-price auctions. \emph{Operations Research} 73(1):209--238, \urlprefix\url{http://dx.doi.org/10.1287/opre.2020.0282}.

\bibitem[{Han et~al.(2020)Han, Zhou, Flores, Ordentlich, \protect\BIBand{} Weissman}]{han2020learning}
Han Y, Zhou Z, Flores A, Ordentlich E, Weissman T (2020) Learning to bid optimally and efficiently in adversarial first-price auctions. \emph{arXiv preprint arXiv:2007.04568} .

\bibitem[{Harvey et~al.(2023)Harvey, Liaw, Perkins, \protect\BIBand{} Randhawa}]{harvey2023optimal}
Harvey NJ, Liaw C, Perkins E, Randhawa S (2023) Optimal anytime regret with two experts. \emph{Mathematical Statistics and Learning} 6(1):87--142.

\bibitem[{Huang \protect\BIBand{} Wang(2025)}]{huang2025stability}
Huang C, Wang K (2025) A stability principle for learning under nonstationarity. \emph{Operations Research} \urlprefix\url{http://dx.doi.org/10.1287/opre.2024.0766}.

\bibitem[{Jadbabaie et~al.(2015)Jadbabaie, Rakhlin, Shahrampour, \protect\BIBand{} Sridharan}]{jadbabaie2015online}
Jadbabaie A, Rakhlin A, Shahrampour S, Sridharan K (2015) Online optimization: Competing with dynamic comparators. \emph{Artificial Intelligence and Statistics}, 398--406 (PMLR).

\bibitem[{Jin \protect\BIBand{} Lu(2023)}]{jin2023first}
Jin Y, Lu P (2023) First price auction is 1-1/e 2 efficient. \emph{Journal of the ACM} 70(5):1--86.

\bibitem[{Kumar et~al.(2024)Kumar, Schneider, \protect\BIBand{} Sivan}]{kumar2024strategically}
Kumar R, Schneider J, Sivan B (2024) Strategically-robust learning algorithms for bidding in first-price auctions. \emph{arXiv preprint arXiv:2402.07363} .

\bibitem[{Lucier et~al.(2024)Lucier, Pattathil, Slivkins, \protect\BIBand{} Zhang}]{lucier2024autobidders}
Lucier B, Pattathil S, Slivkins A, Zhang M (2024) Autobidders with budget and roi constraints: Efficiency, regret, and pacing dynamics. \emph{The Thirty Seventh Annual Conference on Learning Theory}, 3642--3643 (PMLR).

\bibitem[{Lucking-Reiley(2000)}]{lucking2000vickrey}
Lucking-Reiley D (2000) Vickrey auctions in practice: From nineteenth-century philately to twenty-first-century e-commerce. \emph{Journal of economic perspectives} 14(3):183--192.

\bibitem[{{Microsoft Learn Challenge}(2024)}]{xandr2024auction}
{Microsoft Learn Challenge} (2024) Auction overview. \urlprefix\url{https://learn.microsoft.com/en-us/xandr/bidders/auction-overview}, [Online; accessed 2024-02-07].

\bibitem[{Myerson(1981)}]{myerson1981optimal}
Myerson RB (1981) Optimal auction design. \emph{Mathematics of operations research} 6(1):58--73.

\bibitem[{Rakhlin \protect\BIBand{} Sridharan(2013)}]{rakhlin2013optimization}
Rakhlin S, Sridharan K (2013) Optimization, learning, and games with predictable sequences. \emph{Advances in Neural Information Processing Systems} 26.

\bibitem[{Rothkopf et~al.(1990)Rothkopf, Teisberg, \protect\BIBand{} Kahn}]{rothkopf1990vickrey}
Rothkopf MH, Teisberg TJ, Kahn EP (1990) Why are vickrey auctions rare? \emph{Journal of Political Economy} 98(1):94--109.

\bibitem[{Sani et~al.(2014)Sani, Neu, \protect\BIBand{} Lazaric}]{sani2014exploiting}
Sani A, Neu G, Lazaric A (2014) Exploiting easy data in online optimization. \emph{Advances in Neural Information Processing Systems} 27.

\bibitem[{Schneider \protect\BIBand{} Zimmert(2024)}]{schneider2024optimal}
Schneider J, Zimmert J (2024) Optimal cross-learning for contextual bandits with unknown context distributions. \emph{Advances in Neural Information Processing Systems} 36.

\bibitem[{Simchi-Levi et~al.(2023)Simchi-Levi, Wang, \protect\BIBand{} Zheng}]{simchi2023non}
Simchi-Levi D, Wang C, Zheng Z (2023) Non-stationary experimental design under linear trends. \emph{Advances in Neural Information Processing Systems} 36:32102--32116.

\bibitem[{{Statista}(2023)}]{statistaad}
{Statista} (2023) Digital advertising: market data \& analysis. \url{https://www.statista.com/study/42540/digital-advertising-report/}, released: December 2023.

\bibitem[{Steinhardt \protect\BIBand{} Liang(2014)}]{steinhardt2014adaptivity}
Steinhardt J, Liang P (2014) Adaptivity and optimism: An improved exponentiated gradient algorithm. \emph{International conference on machine learning}, 1593--1601 (PMLR).

\bibitem[{Syrgkanis et~al.(2015)Syrgkanis, Agarwal, Luo, \protect\BIBand{} Schapire}]{syrgkanis2015fast}
Syrgkanis V, Agarwal A, Luo H, Schapire RE (2015) Fast convergence of regularized learning in games. \emph{Advances in Neural Information Processing Systems} 28.

\bibitem[{Vickrey(1961)}]{vickrey1961counterspeculation}
Vickrey W (1961) Counterspeculation, auctions, and competitive sealed tenders. \emph{The Journal of finance} 16(1):8--37.

\bibitem[{Wang et~al.(2017)Wang, Zhang, Yuan et~al.}]{wang2017display}
Wang J, Zhang W, Yuan S, et~al. (2017) Display advertising with real-time bidding (rtb) and behavioural targeting. \emph{Foundations and Trends{\textregistered} in Information Retrieval} 11(4-5):297--435.

\bibitem[{Wang et~al.(2023)Wang, Yang, Deng, \protect\BIBand{} Kong}]{wang2023learning}
Wang Q, Yang Z, Deng X, Kong Y (2023) Learning to bid in repeated first-price auctions with budgets. \emph{International Conference on Machine Learning}, 36494--36513 (PMLR).

\bibitem[{Wang et~al.(2020)Wang, Shen, \protect\BIBand{} Zuo}]{wang2020bayesian}
Wang Z, Shen W, Zuo S (2020) Bayesian nash equilibrium in first-price auction with discrete value distributions. \emph{Proceedings of the 19th International Conference on Autonomous Agents and MultiAgent Systems}, 1458--1466.

\bibitem[{Wei \protect\BIBand{} Luo(2018)}]{wei2018more}
Wei CY, Luo H (2018) More adaptive algorithms for adversarial bandits. \emph{Conference On Learning Theory}, 1263--1291 (PMLR).

\bibitem[{Wei \protect\BIBand{} Luo(2021)}]{wei2021non}
Wei CY, Luo H (2021) Non-stationary reinforcement learning without prior knowledge: An optimal black-box approach. \emph{Conference on learning theory}, 4300--4354 (PMLR).

\bibitem[{Wong(2021)}]{wong2021moving}
Wong M (2021) Moving {AdSense} to a first-price auction. \urlprefix\url{https://blog.google/products/adsense/our-move-to-a-first-price-auction/}, product Manager, Google AdSense.

\bibitem[{Yang et~al.(2016)Yang, Zhang, Jin, \protect\BIBand{} Yi}]{yang2016tracking}
Yang T, Zhang L, Jin R, Yi J (2016) Tracking slowly moving clairvoyant: Optimal dynamic regret of online learning with true and noisy gradient. \emph{International Conference on Machine Learning}, 449--457 (PMLR).

\bibitem[{Zhang et~al.(2018)Zhang, Lu, \protect\BIBand{} Zhou}]{zhang2018adaptive}
Zhang L, Lu S, Zhou ZH (2018) Adaptive online learning in dynamic environments. \emph{Advances in neural information processing systems} 31.

\bibitem[{Zhang et~al.(2022)Zhang, Han, Zhou, Flores, \protect\BIBand{} Weissman}]{zhang2022leveraging}
Zhang W, Han Y, Zhou Z, Flores A, Weissman T (2022) Leveraging the hints: Adaptive bidding in repeated first-price auctions. \emph{Advances in Neural Information Processing Systems} 35:21329--21341.

\bibitem[{Zhang et~al.(2021)Zhang, Kitts, Han, Zhou, Mao, He, Pan, Flores, Gultekin, \protect\BIBand{} Weissman}]{zhang2021meow}
Zhang W, Kitts B, Han Y, Zhou Z, Mao T, He H, Pan S, Flores A, Gultekin S, Weissman T (2021) Meow: A space-efficient nonparametric bid shading algorithm. \emph{Proceedings of the 27th ACM SIGKDD Conference on Knowledge Discovery \& Data Mining}, 3928--3936.

\bibitem[{Zhao \protect\BIBand{} Chen(2020)}]{haoyu2020online}
Zhao H, Chen W (2020) Online second price auction with semi-bandit feedback under the non-stationary setting. \emph{Proceedings of the AAAI Conference on Artificial Intelligence}, volume~34, 6893--6900.

\bibitem[{Zhao et~al.(2021)Zhao, Zhang, Jiang, \protect\BIBand{} Zhou}]{zhao2021simple}
Zhao P, Zhang L, Jiang Y, Zhou ZH (2021) A simple approach for non-stationary linear bandits. \emph{arXiv preprint arXiv:2103.05324} .

\end{thebibliography}
\clearpage

\end{document}